\documentclass[11pt,a4paper]{article}
\usepackage[utf8]{inputenc} 
\usepackage[T1]{fontenc}

\usepackage{tabularx}
\usepackage[numbers]{natbib}
\usepackage[pdftex]{graphicx} 
\usepackage[pdftex,linkcolor=black,pdfborder={0 0 0}]{hyperref} 
\usepackage{calc} 
\usepackage{enumitem} 
\usepackage{amsmath}
\usepackage{authblk}
\usepackage{amsthm}
\usepackage{amssymb}
\usepackage{subcaption}
\usepackage{algpseudocode}
\usepackage[linesnumbered,ruled,vlined]{algorithm2e}
\usepackage{float}
\usepackage{booktabs}
\usepackage{adjustbox}
\usepackage{authblk}
\usepackage{multirow}
\usepackage{booktabs}  
\usepackage{makecell}  
\usepackage[english]{babel}

\newtheorem{theorem}{Theorem}[section]

\newtheorem{proposition}[theorem]{Proposition}
\newtheorem{corollary}[theorem]{Corollary}

\newtheorem{example}[theorem]{Example}

\usepackage[a4paper, lmargin=0.1666\paperwidth, rmargin=0.1666\paperwidth, tmargin=0.1111\paperheight, bmargin=0.1111\paperheight]{geometry} 

\usepackage[all]{nowidow} 
\usepackage[protrusion=true,expansion=true]{microtype} 

\usepackage{lipsum} 

\title{Neural Certificate Pricing for Combinatorial Optimization Problems}

\author[1,$\dagger$]{Jingyi Chen}
\author[1,$\dagger$]{Xinyuan Zhang}
\author[1,*]{Xinwu Qian}

\affil[1]{Department of Civil and Environmental Engineering, Rice University, Houston, TX}
\affil[$\dagger$]{These authors contributed equally to this work}
\affil[*]{Corresponding author}

\date{\today}
\begin{document}
\maketitle

\begin{abstract}
Combinatorial optimization (CO) problems are difficult because certifiable discrete structure induces exponential search. One needs to search over the set exponentially many candidates to certify optimality, however, the structural feasibility of a path, packing, or cover can be verified in polynomial time once supplied. In this study, we introduce Neural Certificate Pricing (NCP) that exploits this asymmetry under an unsupervised learning framework. A neural network is trained to predict certificate-level dual prices, while a structured recovery layer constructs the induced primal marginal. NCP can be viewed as amortized separation: instead of enumerating violated inequalities, it learns the residual prices through which their aggregate effect enters recovery. When the certificate-consistency condition holds, the recovered marginal is globally feasible, and a local theory shows that first-order errors in the predicted price induce only second-order loss in objective value. Across three classes of CO problems, NCP either outperforms state-of-the-art neural baselines by large margins or matches them at a fraction of the computation time, and shows stronger out-of-distribution generalization.
\end{abstract}

\section{Introduction}



Combinatorial optimization (CO) on graphs underlies decision-making across a wide range of networked systems, where local choices on nodes and edges are assembled into global structures whose quality determines system performance. The difficulty has two coupled sources: the combinatorial structure produces exponentially many candidate objects from local choices, and coupling constraints render these local choices non-separable within any valid global object. Classical CO confronts this difficulty through structured search over the exponential family, examples include bounding via branch-and-bound~\cite{lawler1966branch}, pricing via column generation~\cite{desaulniers2006column}, and separation via cutting planes~\cite{kelley1960cutting}. These CO problems, however, exhibit a less-exploited asymmetry between search and certification (the one that also defines NP): a candidate solution is a discrete object whose feasibility and objective value are verifiable in polynomial time, while finding the optimizing one is not. In this study, we ask:

\vspace{-0.25em}
\begin{center}
\emph{Can we solve CO by turning certification from an after-the-fact check into a mechanism for recovering optimal solutions?}
\end{center}
\vspace{-0.25em}

Recent learning-based approaches to CO have explored many ways to inject learning into the solution pipeline, including supervised imitation of optimal solutions or solver decisions~\citep{gasse2019exact,li2018combinatorial}, reinforcement-learning construction policies~\citep{bello2016neural, khalil2017learning,kool2018attention}, and unsupervised losses over relaxed or decoded combinatorial objects~\citep{karalias2020erdos,wang2019satnet}. Despite their differences, these methods are largely object-facing or search-facing: learning predicts scores, heatmaps, policies, relaxed marginals, or solver guidance for navigating the combinatorial structure. The certificate that makes a recovered object globally valid is still supplied downstream by decoding, repair and local search. 

\begin{figure}[t]
    \centering
    \resizebox{\linewidth}{!}{%
        \includegraphics[height=3.0cm]{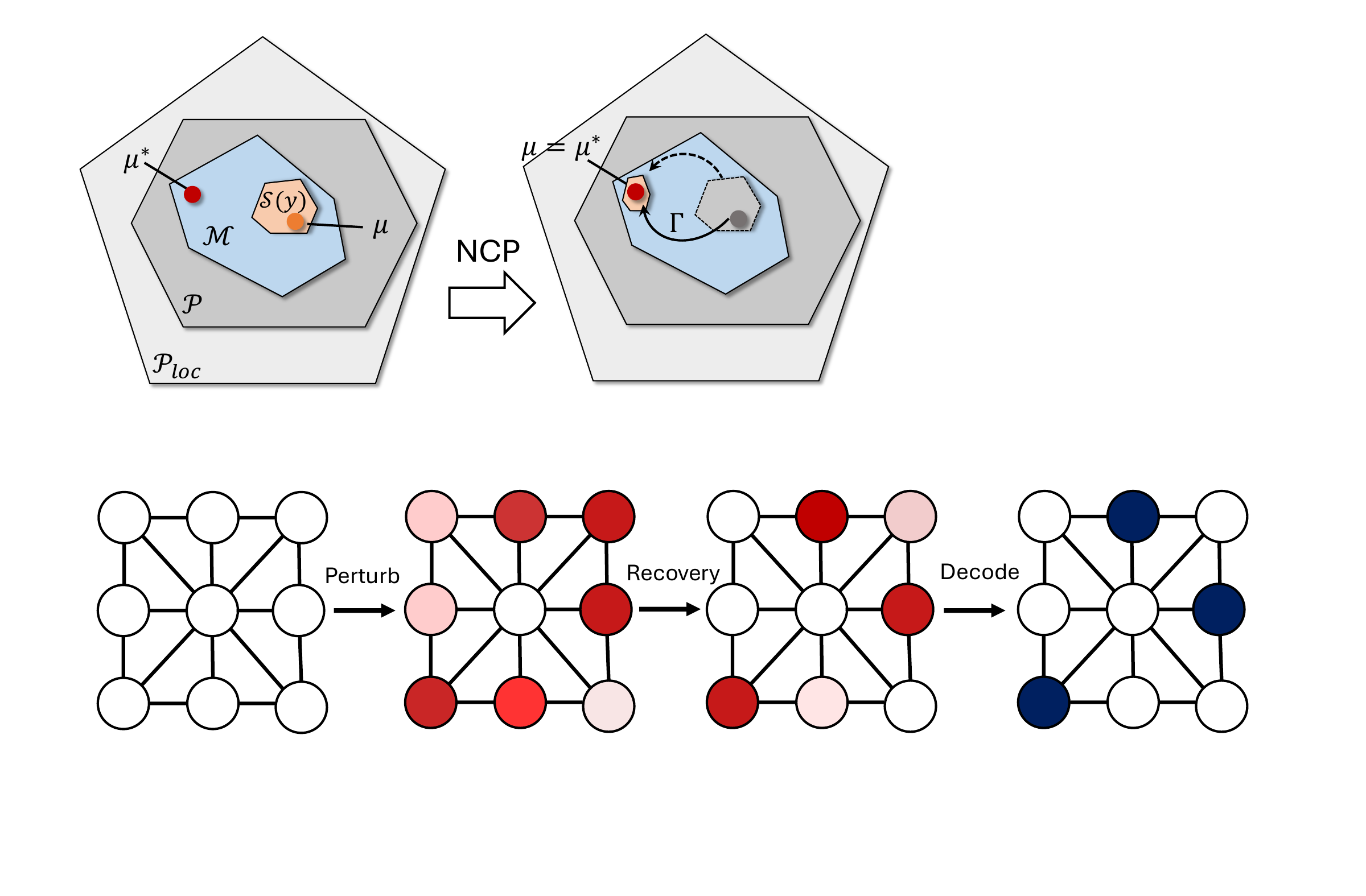}\hspace{0.5cm}
        \includegraphics[height=2.5cm]{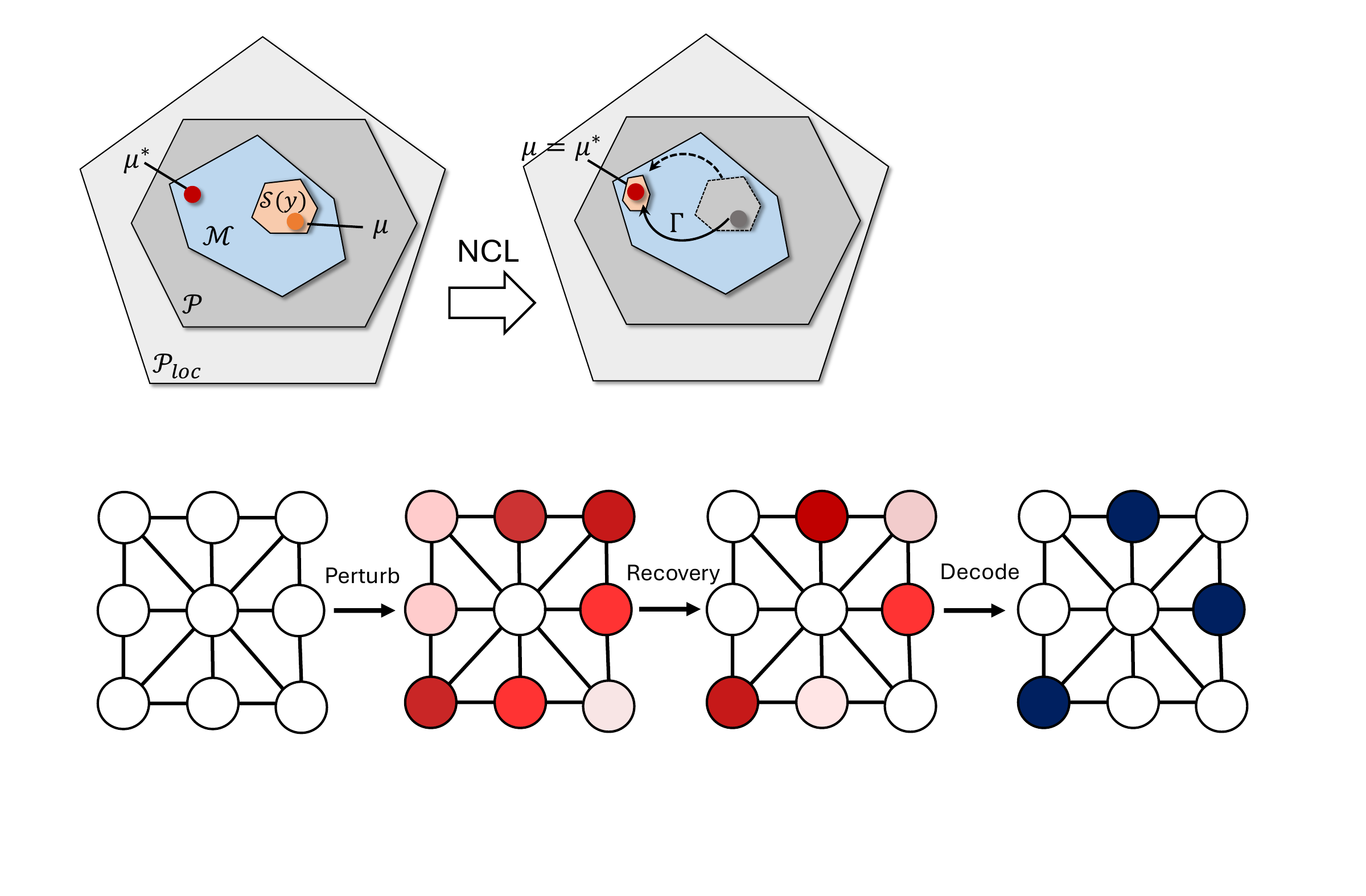}
    }
    \caption{\textbf{Left}: Schematic illustration of NCP. NCP optimizes the problem inside $\mathcal M_G$ and uses perturbation $\Gamma$ (price signal) to move the marginal $\mu$ to $\mu^*$ while being consistent to the structure certificate $S(y)$.
\textbf{Right:} Maximum-independent set instance. NCP perturbs local node values, then recovers a certificate-consistent marginal from which an independent set can be recovered.}
    \label{fig:NCP_overview}
\end{figure}

In this paper, we introduce neural certificate pricing (NCP), an unsupervised framework that turns certificate-guided recovery into a learnable optimization pipeline. We represent feasible discrete objects in marginal space~\citep{wainwright2008graphical}, where local consistency, certificate-induced structure, and retained coupling constraints can be separated. NCP assigns learning to the missing price signal. A neural network predicts prices, and a structured recovery layer uses these prices to recover a certificate-consistent primal marginal. The network therefore does not directly predict or approximate the solution, but instead supplies the price signal under which certificate-conditioned recovery exposes a valid structure. Training then evaluates the original primal objective directly on the recovered marginal, yielding an unsupervised learning-optimization framework without optimal labels. Figure~\ref{fig:NCP_overview} illustrates the NCP pipeline with one representative instantiation.

Our contributions are fourfold. First, we introduce NCP as a novel certificate-based framework that learns pricing signals for structured recovery to solve challenging CO problems. Second, we develop an oracle certificate-pricing formulation that separates the learned residual price signal, the structural certificate that induces tractable recovery, and the dual certificate for retained coupling constraints. Third, we prove a local stability result showing that certificate-consistent recovery behaves like a smooth price-perturbation problem, and that near a primal-realizing point, small price errors cause only second-order degradation in objective value. Fourth, we validate NCP through instantiations on three CO problem classes and show substantial and consistent improvements over neural and non-learning heuristic baselines.

\section{Related Works}
\paragraph{Marginal-polytope inner approximations.}
The marginal polytope provides a unified framework to inference and optimization over discrete combinatorial families. Its facet description, however, is exponential in problem size and intractable in the regimes of interest~\citep{wainwright2008graphical}. The classical response is to replace the marginal polytope by an analytically fixed approximation. Tree-reweighted bounds combine tree-structured distributions on a modeler-chosen tree set~\citep{wainwright2005new}. Bethe and Kikuchi methods relax to the local polytope with a closed-form entropy surrogate~\citep{yedidia2005constructing}. Cutting-plane tightening adds valid inequalities from a separation oracle to close the gap between local and the marginal polytopes~\citep{sontag2012tightening}. Differentiable structured-prediction layers expose a fixed structured polytope as an end-to-end trainable component~\citep{niculae2018sparsemap}. In each case the inner approximation is fixed before training, independent of the cost vector. Our framework departs from this pattern by treating the inner approximation itself as the learned object.

\paragraph{Learning for solving combinatorial optimization.}
Machine learning for CO has developed along supervised, reinforcement-learning, and unsupervised directions; see \citep{bengio2021machine,smit2025graph,da2025large,schiffer2026combinatorial,
chung2025neural,zhou2025learning} for surveys. Majority of the ML for CO study targets approximation the final optimal solutions. Supervised methods imitate optimal or high-quality solutions, as in Pointer Networks for routing and GNN-based search policies for graph problems~\citep{vinyals2015pointer,li2018combinatorial,joshi2019efficient}, but require expensive solved instances. Reinforcement-learning methods remove the need for labels by optimizing rewards over sampled solutions \citep{bello2016neural,khalil2017learning,kool2018attention,kwon2020pomo}. Unsupervised methods train from structural losses or probabilistic objectives, including neural probabilistic-method objectives, recurrent GNNs for CSPs, and diffusion-based samplers over discrete solution spaces \citep{karalias2020erdos,toenshoff2019run,sanokowski2024diffusion}. NCP also uses unsupervised training, but its target is different: the network learns a perturbation as the pricing signal, and the recovery layer then converts into a certificate-consistent marginal evaluated under the original primal objective.

\paragraph{Learning to shape relaxations and search.}
A related line does not predict a final solution directly, but learns signals that guide a downstream optimization procedure. Branching and cutting policies use graph neural networks to guide mixed-integer solvers \citep{gasse2019exact}. Other methods learn graph modifications, search heuristics, or iterative refinement rules whose outputs are subsequently decoded or repaired by a combinatorial procedure \citep{wang2021bi,acikalin2025learning}. Primal-dual message-passing methods also use optimization structure to shape neural updates \citep{he2025primal}. These methods show the value of learning algorithmic guidance rather than complete solutions. NCP follows this broader direction, but with a different downstream object. Prior methods supply guidance to a search or repair procedure that explores discrete candidates, while NCP supplies a perturbation to a certificate-conditioned recovery layer that produces a feasible marginal in one shot.

\section{Neural Certificate Pricing}
\label{sec:framework}

We study a broad class of graph-based CO problems. For convenience, we use edge-indexed configurations for notation, and the framework applies directly to other decision-indexed configurations such as nodes and assignment pairs. Let $G=(V,E,c)$ denote an instance with edge costs $c\in\mathbb R^{|E|}$, and let $\mathcal X_G\subseteq 2^E$ be the set of feasible edge configurations. Each configuration $x\in\mathcal X_G$ is represented by its incidence vector $\phi(x)=(\mathbf 1\{e\in x\})_{e\in E}$. Following~\cite{wainwright2008graphical}, the exact marginal polytope is
\[
    \mathcal M_G=\operatorname{conv}\{\phi(x):x\in\mathcal X_G\}.
\]
A point $\mu\in\mathcal M_G$ is a feasible mean parameter, with $\mu_e$ equal to the marginal probability that edge $e$ appears under some distribution over $\mathcal X_G$. Using the marginal polytope view, the original problem can be written as a free-energy problem:
\begin{equation}
    p^\star(G)
    =
    \min_{\mu\in\mathcal M_G}
    F_G(\mu),
    \qquad
    F_G(\mu)
    =
    \langle c,\mu\rangle-\tau H_{\mathcal M}(\mu),
    \tag{P}
    \label{eq:primal}
\end{equation}
where $\tau\ge 0$ and $H_{\mathcal M}(\mu)=\sup_{q:\mathbb E_q[\phi]=\mu}H(q)$ is the maximum entropy among distributions over $\mathcal X_G$ with mean $\mu$. When $\tau=0$, \eqref{eq:primal} recovers the original combinatorial objective. When $\tau>0$, it gives a smooth free-energy problem on the interior of $\mathcal M_G$. The full derivation is given in Appendix~\ref{app:kl_derivation}. 

In CO, optimization over $\mathcal M_G$ is intractable and one usually begins with a relaxed polytope 
\begin{equation}
    \mathcal P_G
    :=
    \{\mu\in\mathcal P_{\mathrm{loc}}(G): A_G\mu\le b_G\},
    \qquad
    \mathcal M_G\subseteq \mathcal P_G,
    \label{eq:local_global}
\end{equation}
where $\mathcal P_{\mathrm{loc}}(G)$ is the polytope of local constraints (i.e., flow conservation at local nodes), and $A_G\mu\le b_G$ are the explicit global coupling constraints (i.e., capacity constraints shared by all variables). In general, this relaxation is loose and $\mathcal P_G$ contains relaxation-feasible mean parameters outside $\mathcal M_G$. Classical cutting-plane methods~\cite{kelley1960cutting} can close this gap but requires separation over potentially exponentially many valid inequalities.

\subsection{Certificate-conditioned recovery under perturbation}
\label{subsec:local_recovery}
The core idea of neural certificate pricing (NCP) is to replace direct optimization over $\mathcal M_G$ by tractable recovery over certificate-induced structured sets. The key asymmetry is that characterizing the full $\mathcal M_G$ can be intractable, while certifying a $\mu \in \mathcal M_G$ can be much easier. Specifically, marginal $\mu \in \mathcal M_G$ once it is realized as a convex combination of feasible discrete configurations. NCP then exploits this fact by learning certificates that induce such realizable recovery sets.

\paragraph{Certificate-conditioned structure recovery map.}
We assume the existence of a certificate $y\in\mathcal Y_G$ that indexes a tractable recovery set $\mathcal S_G(y)\subseteq\mathcal P_{\mathrm{loc}}(G)$. Equivalently, one can view $y$ as the pricing signal that can be turned into the set of (convex combinations of) structured patterns $\mathcal S_G(y)$, such as an ordering, assignment, or conflict patterns. For feasible discrete objects and their convex combinations, $\mathcal P_{loc}$ automatically holds, and the recovery set maps to:
\begin{equation}
    \mathcal S_G(y)\cap \mathcal P_G \subseteq \mathcal M_G,
    \qquad
    \forall y\in\mathcal Y_G.
    \label{eq:structure_preserving}
\end{equation}
When there are no explicit coupling constraints $A_Gu\leq b_G$, Eq.~\ref{eq:structure_preserving} reduces to $\mathcal S_G(y)\subseteq\mathcal M_G$, and any $y$ can induce a feasible recovery set. When explicit coupling constraints are present, we can use their dual price to induce a valid certificate. 

\begin{example}
The above tractable recovery structure is widely available in many CO problems. 
In routing, node potentials ($y$) induce an acyclic order of nodes ($S(y)$), so path recovery can be carried out by a Bellman-style dynamic program (polynomial). In assignment, reduced-cost structure restricts each task to a small set of candidate machines, so recovery operates on a tractable local assignment subproblem. For packing problems, conflict or clique covers
replace pairwise edge constraints by stronger cover constraints while keeping the
description tractable. 
\end{example}

With the above preliminaries, we next introduce the general recovery map function with the presence of global coupling constraints. We denote $\lambda\in\mathbb R_+^{m_G}$ the dual variable  prices the $A_G\mu\le b_G$, and we introduce the perturbation $\Gamma\in\mathbb R^{E}$ to augment the dual prices. Given fixed $(\Gamma,y,\lambda)$, the primal recovery layer can be written as
\begin{equation}
    Q_G(\Gamma,y,\lambda)
    =
    \arg\min_{\mu\in\mathcal S_G(y)}
    \left\{
        F_G(\mu)
        +
        \langle \Gamma + A_G^\top\lambda,\mu\rangle
    \right\}. 
    \label{eq:local_recovery}
    \tag{PR}
\end{equation}
If a stronger condition $S_G(y)\subseteq \mathcal M_G$ holds, which we formally establish in the next section, then the recovery map is feasibility-preserving for every $(\Gamma,y,\lambda)$, and each $\hat{\mu}\in Q$ induces a primal bound $F_G(\hat\mu)$ for~\eqref{eq:primal}.Central to this recovery map is the role of perturbation $\Gamma$. When $\Gamma=0$, \ref{eq:local_recovery} reduces to the canonical Lagrange relaxation~\cite{geoffrion1974}, but is subject to a weak primal bound depending on the tightness of its Lagrange relaxation. $\Gamma$ hence supplies the missing price action that allows one to optimize beyond the original Lagrange space, and is therefore the learning target in NCP. We formalize the sufficiency condition of a linear perturbed $\Gamma$ for recovering optimality in proposition~\ref{prop:linear_sufficiency}. 
\begin{proposition}[Sufficiency of linear cost perturbation]
\label{prop:linear_sufficiency}
Fix a certificate $y$ such that an optimizer
$\mu^*$ belongs to $S_G(y)$.
Suppose the valid inequalities omitted from the recovery layer in certifying
$\mu^*$ can be written as $B_G\mu\le d_G$, and let $(\lambda^*,z^*)$ be KKT
multipliers at $\mu^*$ for minimizing $F_G$ over $S_G(y)$ subject to
$A_G\mu\le b_G$ and $B_G\mu\le d_G$. Then,
\[
\Gamma^*:=B_G^\top z^*
\quad\Longrightarrow\quad
\mu^*\in Q_G(\Gamma^*,y,\lambda^*).
\]
\end{proposition}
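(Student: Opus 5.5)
The plan is to compare the KKT system of the constrained problem with the first-order optimality condition of the recovery problem \eqref{eq:local_recovery}, and to use convexity so that these first-order conditions are sufficient.

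\textbf{Step 1: convexity of the recovery objective.} I would first record that $\mathcal S_G(y)$ is convex: it is a set of convex combinations of structured patterns. The free energy $F_G(\mu)=\langle c,\mu\rangle-\tau H_{\mathcal M}(\mu)$ is convex for $\tau\ge 0$. This follows because $H_{\mathcal M}$ is concave: it is the supremum of the concave entropy over the affine fibre $\{q:\mathbb E_q[\phi]=\mu\}$, and that sup-projection preserves concavity. Hence both the constrained problem and \eqref{eq:local_recovery} are convex programs over $\mathcal S_G(y)$. This lets me write optimality conditions in normal-cone form even when $F_G$ is nonsmooth, using a subgradient when $\tau=0$ or at the boundary.

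\textbf{Step 2: the KKT system at $\mu^*$.} For minimizing $F_G$ over $\mathcal S_G(y)$ subject to $A_G\mu\le b_G$ and $B_G\mu\le d_G$, the KKT conditions give $\lambda^*\ge 0$ and $z^*\ge 0$ satisfying stationarity
\[
0\in \partial F_G(\mu^*)+A_G^\top\lambda^*+B_G^\top z^*+N_{\mathcal S_G(y)}(\mu^*),
\]
together with primal feasibility and complementary slackness. By hypothesis the set constraint $\mathcal S_G(y)$ is kept as an abstract constraint, entering only through its normal cone, rather than being dualized.

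\textbf{Step 3: matching with \eqref{eq:local_recovery}.} Substituting $\Gamma^*=B_G^\top z^*$ and $\lambda=\lambda^*$, the objective of \eqref{eq:local_recovery} becomes
\[
\Psi(\mu)=F_G(\mu)+\langle B_G^\top z^*+A_G^\top\lambda^*,\mu\rangle .
\]
Its subdifferential is $\partial F_G(\mu)+A_G^\top\lambda^*+B_G^\top z^*$, using the sum rule with a linear term. Stationarity from Step 2 therefore says exactly $0\in\partial\Psi(\mu^*)+N_{\mathcal S_G(y)}(\mu^*)$. Because $\Psi$ is convex on the convex set $\mathcal S_G(y)\ni\mu^*$, this condition is sufficient for global optimality. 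Hence $\mu^*\in Q_G(\Gamma^*,y,\lambda^*)$.

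As a check that avoids subdifferential calculus, I would also give the direct Lagrangian argument. For any $\mu\in\mathcal S_G(y)$, the saddle-point inequality from KKT with convexity gives $\Psi(\mu)\ge\Psi(\mu^*)$. Complementary slackness is not needed for the argmin claim; it only ensures that $\Psi(\mu^*)=F_G(\mu^*)+\langle\lambda^*,b_G\rangle+\langle z^*,d_G\rangle$.

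\textbf{Main obstacle.} The main difficulty is justifying that convex KKT stationarity is sufficient. This requires two things. First, $H_{\mathcal M}$ must be concave, which Step 1 establishes, so that $F_G$ is convex. Second, the KKT multipliers must be understood with $\mathcal S_G(y)$ retained as an abstract set, so that its normal cone appears in the stationarity condition. I would state explicitly that the hypothesis ``$(\lambda^*,z^*)$ are KKT multipliers'' is read in this sense. When $\tau=0$ the problem is linear, and the sufficiency is immediate from the Lagrangian argument.
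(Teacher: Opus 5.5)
Your proof is correct and follows essentially the same route as the paper. Both arguments write KKT stationarity with the normal cone of $\mathcal S_G(y)$ kept as an abstract set, absorb $B_G^\top z^*$ into $\Gamma^*$, and read off the first-order optimality condition of the convex recovery problem \eqref{eq:local_recovery}; your explicit argument that $H_{\mathcal M}$ is concave, so that $F_G$ is convex and the first-order condition is sufficient, fills in a step the paper only asserts by calling the recovery problem convex.
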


\subsection{Certificate consistency and primal realization}
\label{subsec:certificate_consistency}
The structure recovery map in~\eqref{eq:local_recovery} gives a tractable response once $(\Gamma,y,\lambda)$ is fixed. But this alone does not certify that the response is a valid marginal satisfying (Eq~(\ref{eq:structure_preserving}). Here we establish the remaining consistency condition that ties the recovered marginal back to $\mathcal M_G$. 

Let $C_G$ be the certificate readout map and let $\hat\mu = Q_G(\Gamma,y,\lambda)$. We call $(y,\lambda)$ a \emph{consistent certificate at perturbation $\Gamma$}
if
\begin{equation}
    y=C_G(\hat\mu),
    \qquad
    A_G\hat\mu\le b_G,
    \qquad
    \lambda\ge0,
    \qquad
    \lambda^\top(A_G\hat\mu-b_G)=0 .
    \label{eq:certificate_consistency_kkt}
\end{equation}
The first condition enforces structural self-consistency at $\hat{\mu}$, while the remaining conditions are the KKT complementarity conditions for the explicit global coupling constraints $A_G\mu\le b_G$. The following proposition then formalizes the consistency of the recovery map under perturbation. 

\begin{proposition}[Feasibility and shifted optimality of consistent recovery]
\label{prop:consistent_recovery}
Let $\hat\mu=Q_G(\Gamma,y,\lambda)$ and suppose $(y,\lambda)$ is a consistent
certificate at perturbation $\Gamma$. Then $\hat\mu\in\mathcal M_G$.
Moreover, for every $\nu\in\mathcal S_G(y)\cap\mathcal P_G$,
\begin{equation}
    F_G(\hat\mu)-F_G(\nu)
    \le
    \langle \Gamma,\nu-\hat\mu\rangle .
    \label{eq:shifted_recovery_gap}
\end{equation}
\end{proposition}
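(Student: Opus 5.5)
The proof has two parts: feasibility, then the shifted optimality inequality.

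\emph{Feasibility.} By definition of \eqref{eq:local_recovery}, $\hat\mu\in\mathcal S_G(y)$. Consistency \eqref{eq:certificate_consistency_kkt} gives $A_G\hat\mu\le b_G$. Since $\mathcal S_G(y)\subseteq\mathcal P_{\mathrm{loc}}(G)$, this places $\hat\mu$ in $\mathcal P_G$, and hence $\hat\mu\in\mathcal S_G(y)\cap\mathcal P_G$. The structure-preserving inclusion \eqref{eq:structure_preserving} then yields $\hat\mu\in\mathcal M_G$. The self-consistency condition $y=C_G(\hat\mu)$ guarantees that the certificate used to build $\mathcal S_G(y)$ is the one read out from $\hat\mu$, so \eqref{eq:structure_preserving} is invoked for the correct $y$. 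Beyond that, only the coupling feasibility is actually needed.

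\emph{Shifted gap.} I will use the optimality of $\hat\mu$ in \eqref{eq:local_recovery} directly, with no first-order conditions, so no differentiability of $H_{\mathcal M}$ at boundary points is required. Fix $\nu\in\mathcal S_G(y)\cap\mathcal P_G$. Since $\nu\in\mathcal S_G(y)$, it is feasible for \eqref{eq:local_recovery}, so
\[
F_G(\hat\mu)+\langle\Gamma+A_G^\top\lambda,\hat\mu\rangle\le F_G(\nu)+\langle\Gamma+A_G^\top\lambda,\nu\rangle .
\]
Rearranging gives
\[
F_G(\hat\mu)-F_G(\nu)\le\langle\Gamma,\nu-\hat\mu\rangle+\lambda^\top A_G(\nu-\hat\mu).
\]
It remains to show the last term is nonpositive. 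By complementarity, $\lambda^\top A_G\hat\mu=\lambda^\top b_G$. Since $\nu\in\mathcal P_G$, we have $A_G\nu\le b_G$, and with $\lambda\ge0$ this gives $\lambda^\top A_G\nu\le\lambda^\top b_G$. Therefore $\lambda^\top A_G(\nu-\hat\mu)\le 0$, which proves \eqref{eq:shifted_recovery_gap}.

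\emph{Main obstacle.} The only delicate point is interpretive rather than computational. The notation $\hat\mu=Q_G(\cdot)$ treats the argmin as a point, so I will read the statement as holding for any selected minimizer; the argument above never uses uniqueness. Separately, \eqref{eq:primal} requires a minimizer of \eqref{eq:local_recovery} to exist. That holds when $\mathcal S_G(y)$ is compact and $F_G$ is lower semicontinuous, which is true since $-H_{\mathcal M}$ is convex and lower semicontinuous. These conditions are implicit in the hypothesis that $\hat\mu$ is given, so I expect no real obstruction.
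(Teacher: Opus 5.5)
Your proposal is correct and follows the paper's own proof. Feasibility comes from $\hat\mu\in\mathcal S_G(y)$, $A_G\hat\mu\le b_G$, $\mathcal S_G(y)\subseteq\mathcal P_{\mathrm{loc}}(G)$ and the inclusion \eqref{eq:structure_preserving}; the shifted gap comes from comparing objective values in \eqref{eq:local_recovery} and then using $\lambda\ge0$, $A_G\nu\le b_G$ and complementarity to get $\lambda^\top A_G(\nu-\hat\mu)\le 0$. As you half-note, $y=C_G(\hat\mu)$ is not actually needed here, because \eqref{eq:structure_preserving} is assumed for every $y\in\mathcal Y_G$.
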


The proof is given in Appendix~\ref{app:proof_consistent_recovery}.
Proposition~\ref{prop:consistent_recovery} turns consistency into a certificate: once $(y,\lambda)$ is consistent at $\Gamma$, the recovered point is a valid marginal and satisfies the optimality condition of the shifted structured problem. 


\paragraph{Oracle equilibrium-constrained characterization.}

Proposition~\ref{prop:consistent_recovery} gives an oracle characterization of NCP:
\begin{equation}
\begin{aligned}
    \min_{\Gamma,\,y,\,\lambda,\,\mu}\quad
    & F_G(\mu) \\
    \mathrm{s.t.}\quad
    & y=C_G(\mu),\qquad \mu \in Q_G(\Gamma,y,\lambda), \qquad
      0\le \lambda \perp b_G-A_G\mu \ge 0
\end{aligned}
\tag{OC}
\label{eq:oracle_certificate_mpec}
\end{equation}
Here $0\le \lambda \perp b_G-A_G\mu\ge0$ denotes primal feasibility, dual feasibility, and complementary slackness. Problem~\eqref{eq:oracle_certificate_mpec} is a bi-level problem in which the outer level chooses $\Gamma$ to minimize the recovered value $F_G(\mu)$, and the inner level recovers a feasible marginal through the certificate-consistency equilibrium, consisting of structured recovery and complementarity for the explicit coupling constraints. Equivalently, after replacing the inner recovery problem by its optimality system, the formulation is a mathematical program with equilibrium constraints. It therefore provides an oracle characterization for designing $\Gamma$. Since any solution yields a valid marginal certificate, for the original linear CO objective, the associated cost $\langle c,\mu\rangle$ gives a valid primal bound because $\mu\in\mathcal M_G$. Finally, when $\mathcal S_G(y)=\mathcal P_{\mathrm{loc}}(G)$ and the certificate has no structural role, \eqref{eq:oracle_certificate_mpec} reduces to ordinary relaxed primal-dual recovery as a special case.

\subsection{Unsupervised learning for NCP}
\label{subsec:unsupervised_learning}

The oracle characterization in~\eqref{eq:oracle_certificate_mpec} identifies $\Gamma$ as the residual perturbation to be optimized. NCP amortizes this outer variable by predicting $\Gamma_\theta(G)\in\mathbb R^{|E|}$ from the instance $G$, followed by a fixed-point process to solve for the certificate consistency condition. Specifically, for a given $(G,\theta)$, write $z=(y,\lambda)$ and define the projected dual update for the fixed-point process as:
\[
    D_\rho(\lambda,\mu)
    =
    \Pi_{\mathbb R_+^{m_G}}
    [\lambda+\rho(A_G\mu-b_G)] .
\]
Starting from $z^0=(y^0,\lambda^0)$, the forward pass applies the damped projected certificate iteration
\begin{equation}
    \mu^{t+1}=Q_G(\Gamma_\theta(G),y^t,\lambda^t),\quad
    \bar z^{t+1}=\bigl(C_G(\mu^{t+1}),D_\rho(\lambda^t,\mu^{t+1})\bigr),\quad
    z^{t+1}=(1-\alpha)z^t+\alpha\bar z^{t+1},
    \label{eq:NCP_forward_iteration}
\end{equation}
where $\rho>0$ and $\alpha\in(0,1]$. The two components of $\bar z^{t+1}$ update the structure certificate and the dual certificate, respectively. Damping stabilizes the forward solver without changing its fixed points. At convergence, the selected state $z^\star=(y^\star,\lambda^\star)$ satisfies the certificate consistency conditions, and the recovered marginal is
\begin{equation}
    \mu_\theta(G)=Q_G(\Gamma_\theta(G),y^\star,\lambda^\star).
    \label{eq:NCP_recovered_marginal}
\end{equation}
The unsupervised objective then evaluates the unperturbed value of this recovered marginal:
\begin{equation}
    \min_\theta \mathcal L_{\mathrm{NCP}}(\theta)=\mathbb E_{G\sim\mathcal D}\!\left[F_G(\mu_\theta(G))\right].
    \label{eq:NCP_unsupervised_objective}
\end{equation}
This objective trains the network from the value of the certificate-consistent marginal induced by its predicted perturbation. The perturbation is used only inside recovery and certificate consistency, while the loss is measured under the original primal objective. We compute gradients by implicit differentiation~\cite{bai2019deep} through the certificate-consistency fixed point, which avoids backpropagating through long recovery iterations. The implicit-gradient formula and its regularity conditions are given in Appendix~\ref{app:implicit_gradient}.

\section{Theoretical Analysis}
\label{sec:theory}

\eqref{eq:oracle_certificate_mpec} casts NCP as an equilibrium-constrained problem. This may raise the concern that we have relocated combinatorial difficulty into a generic MPEC, a class notoriously hard to optimize~\cite{luo1996mathematical}. In this section, we address this concern by showing favorable local structure of \eqref{eq:oracle_certificate_mpec}, where it can be reduced locally to an unconstrained smooth problem in the perturbation object \(\Gamma\). 

Write \(z=(y,\lambda)\) and \(\hat\mu_G(z,\Gamma):=Q_G(\Gamma,y,\lambda)\), and fix a reference pair \((\bar z,\bar\Gamma)\) at which OC is solved. At
this point, partition the explicit constraints \(A_G\hat\mu_G\le b_G\) into active and inactive blocks, labeled \(\mathcal A\) and \(\mathcal I\):
\[
    A_G=\begin{bmatrix}A_{\mathcal A}\\ A_{\mathcal I}\end{bmatrix},
    \qquad
    b_G=\begin{bmatrix}b_{\mathcal A}\\ b_{\mathcal I}\end{bmatrix},
\]
with \(A_{\mathcal A}\hat\mu_G-b_{\mathcal A}=0\) and \(A_{\mathcal I}\hat\mu_G-b_{\mathcal I}<0\), and their dual variables \(\lambda_{\mathcal A}\) and \(\lambda_{\mathcal I}\), respectively. At \((\bar z,\bar\Gamma)\), certificate consistency and complementarity can be compactly represented as the equality system
\begin{equation}
    \mathcal K_G(z,\Gamma)
    =
    \begin{bmatrix}
        y-C_G(\hat\mu_G(z,\Gamma))\\[2pt]
        A_{\mathcal A}\hat\mu_G(z,\Gamma)-b_{\mathcal A}\\[2pt]
        \lambda_{\mathcal I}
    \end{bmatrix}
    =0.
    \label{eq:certificate_kkt_residual}
\end{equation}

We want to extend OC's solution from a single point at \(\bar\Gamma\) to a smooth map on a neighborhood. This requires (a) the active set on \(A_G\mu\le b_G\) to be stable under perturbation, and (b) the equality system \(\mathcal K_G(z,\Gamma)=0\) to be locally invertible in \(z\). For the NCP construction, both hold mildly. (a) requires strict complementarity~\cite{robinson1980strongly} to hold only for explicit global coupling constraints $A_G$, which is the typical non-degenerate case with strictly positive shadow prices for binding constraints, and slack capacities carry zero multipliers.  (b) follows from positive-definiteness of the entropic Hessian at \(\tau>0\), linear independence of the rows of \(A_{\mathcal A}\), and contractivity of the damped certificate iteration. We refer to a point \((\bar z,\bar\Gamma)\) at which (a) and (b) hold as a
\emph{regular OC solution}. At any such point, the implicit function
theorem~\cite{krantz2002implicit} yields a locally unique \(C^2\)
solution map \(\Gamma\mapsto z_\Gamma\) on a neighborhood \(U\) of
\(\bar\Gamma\), with \(z_{\bar\Gamma}=\bar z\), and OC reduces on \(U\)
to the unconstrained smooth problem
\begin{equation}
    \min_{\Gamma\in U}\Phi_G(\Gamma),
    \qquad
    \Phi_G(\Gamma)
    :=
    F_G\bigl(\hat\mu_G(z_\Gamma,\Gamma)\bigr),
    \label{eq:reduced_oc_objective}
\end{equation}
where \(F_G\) is the unperturbed objective and \(\Gamma\) enters only through the recovered marginal \(\hat\mu_G(z_\Gamma,\Gamma)\).

With the smooth reduction in hand, we can now characterize OC's stationary points directly through the reduced objective. Theorem~\ref{thm:strong-stationarity} establishes the key equivalence on stationary points of \(\Phi_G\) are exactly KKT stationary points of OC in its local equality-constrained form. The reduction therefore preserves OC's first-order structure through the smooth reduced problem.

\begin{theorem}[Reduced stationarity of OC]
\label{thm:strong-stationarity}
At a regular OC solution, the implicit reduction defines a \(C^2\)
objective \(\Phi_G\) on \(U\). Moreover,
\[
    \nabla_\Gamma\Phi_G(\bar\Gamma)=0
\]
if and only if \((\bar z,\bar\Gamma)\) satisfies the KKT stationarity
conditions of the local equality-constrained OC problem
\[
    \min_{z,\Gamma}
    F_G(\hat\mu_G(z,\Gamma))
    \quad
    \mathrm{s.t.}
    \quad
    \mathcal K_G(z,\Gamma)=0 .
\]
\end{theorem}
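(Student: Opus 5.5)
The plan is to treat this as the standard equivalence between the reduced-gradient condition and the Lagrange (KKT) conditions of an equality-constrained program whose constraint Jacobian in $z$ is invertible. Set $f(z,\Gamma):=F_G(\hat\mu_G(z,\Gamma))$. First I would record the regularity that makes the statement meaningful. Under condition (b) of a regular OC solution, the entropic Hessian at $\tau>0$ is positive definite, so the objective of \eqref{eq:local_recovery} is strongly convex on $\mathcal S_G(y)$. The recovery map $\hat\mu_G$ is therefore single-valued, and I will assume, as the paper does when invoking the implicit function theorem, that it is $C^2$ near $(\bar z,\bar\Gamma)$. Consequently $\mathcal K_G$ and $f$ are $C^2$. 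Condition (b) also gives invertibility of $J:=\partial_z\mathcal K_G(\bar z,\bar\Gamma)$. The reason is that the $y$-block $I-\partial_y(C_G\circ\hat\mu_G)$ is nonsingular by contractivity of the damped iteration, the $\lambda_{\mathcal A}$-block is controlled by $A_{\mathcal A}\,\partial_{\lambda}\hat\mu_G$, which is negative definite on the range of $A_{\mathcal A}^\top$ because $\partial_\lambda\hat\mu_G=-(\nabla^2F_G)^{-1}A_G^\top$ restricted appropriately and $A_{\mathcal A}$ has full row rank, and the $\lambda_{\mathcal I}$-block is the identity. Condition (a), strict complementarity, makes the active set constant on a neighborhood. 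Hence $\mathcal K_G=0$ describes OC locally, and the implicit function theorem yields a $C^2$ map $z_\Gamma$ on $U$. The composition $\Phi_G(\Gamma)=f(z_\Gamma,\Gamma)$ is then $C^2$, which settles the first claim.

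Next I would differentiate. From $\mathcal K_G(z_\Gamma,\Gamma)\equiv0$ we get $\partial_\Gamma z_\Gamma=-J^{-1}\partial_\Gamma\mathcal K_G$. The chain rule then gives
\[
\nabla_\Gamma\Phi_G(\bar\Gamma)=\nabla_\Gamma f-(\partial_\Gamma\mathcal K_G)^\top J^{-\top}\nabla_z f ,
\]
with everything evaluated at $(\bar z,\bar\Gamma)$. Define the multiplier $\eta^*:=-J^{-\top}\nabla_z f$. This is the unique $\eta$ satisfying $\nabla_z f+J^\top\eta=0$, and it exists because $J$ is invertible. With this choice, the reduced gradient equals $\nabla_\Gamma f+(\partial_\Gamma\mathcal K_G)^\top\eta^*$, which is the $\Gamma$-component of the Lagrangian gradient $\nabla(f+\eta^\top\mathcal K_G)$.

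The two directions then follow. ($\Rightarrow$) If $\nabla\Phi_G(\bar\Gamma)=0$, then $(\bar z,\bar\Gamma,\eta^*)$ satisfies both stationarity blocks and $\mathcal K_G=0$, so it is a KKT point. Note that LICQ holds automatically, because $\partial\mathcal K_G=[J\ \ \partial_\Gamma\mathcal K_G]$ has full row rank. ($\Leftarrow$) If some $\eta$ satisfies the KKT conditions, the $z$-block forces $\eta=\eta^*$ by invertibility of $J$. Substituting into the $\Gamma$-block gives exactly $\nabla\Phi_G(\bar\Gamma)=0$.

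I expect the main obstacle to be the regularity step rather than the algebra. The delicate points are justifying that $\hat\mu_G$ is $C^2$ when $\mathcal S_G(y)$ depends on $y$, and that contractivity of the damped iteration, together with the $\lambda_{\mathcal A}$-block, yields invertibility of the full, non-block-diagonal $J$. For the latter I would first establish the block-triangular or Schur-complement structure: take the Schur complement with respect to the $y$-block, using that $\|\partial_y(C_G\circ\hat\mu_G)\|<1$ from contractivity, and then argue that the remaining $\lambda_{\mathcal A}$ Schur complement stays negative definite. If this cannot be fully derived, I would state invertibility of $J$ explicitly as part of the definition of a regular OC solution, as the text preceding the theorem effectively does, and then run the argument above.
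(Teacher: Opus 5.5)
Your proposal is correct and is essentially the paper's own proof: the implicit function theorem applied to $\mathcal K_G=0$, the sensitivity $\partial_\Gamma z_\Gamma=-(\partial_z\mathcal K_G)^{-1}\partial_\Gamma\mathcal K_G$, and the observation that invertibility of $\partial_z\mathcal K_G$ pins the multiplier down from the $z$-block, so that the $\Gamma$-block of the KKT system is exactly $\nabla_\Gamma\Phi_G(\bar\Gamma)=0$. The paper simply builds nonsingularity of $\partial_z\mathcal K_G(\bar z,\bar\Gamma)$ into the definition of a regular OC solution, so your fallback is exactly what it does; if you want to derive it, drop the Schur-complement plan, since contractivity of the damped map does not give $\|\partial_y(C_G\circ\hat\mu_G)\|<1$ (damping can tame a Jacobian of norm above one), and observe instead that under strict complementarity the fixed-point residual satisfies $R_{\Gamma,G}(z)=\alpha\,\mathrm{diag}(I,-\rho I,I)\,\mathcal K_G(z,\Gamma)$ near $\bar z$, so contractivity of $\mathcal T^\alpha_{\Gamma,G}$ by itself makes $\partial_z\mathcal K_G$ invertible.
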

The proof can be found in Appendix~\ref{app:optimization-proofs}. Note that \(\Phi_G\) is precisely the per-instance object that the unsupervised loss \(\mathcal L_{\mathrm{NCP}}\) minimizes, with the amortization \(\Gamma=\Gamma_\theta(G)\) supplying the outer parameterization across instances. The theorem therefore characterizes the optimization landscape that the network sees through training. More importantly, the smoothness of \(\Phi_G\) gives a desirable robustness property~\cite{bonnans2013perturbation}: even when the certificate state drifts under perturbation, the outer objective value remains close to the stationary value. We characterize this results in Theorem~\ref{thm:second-order-stability} below. 

\begin{theorem}[Second-order value stability]
\label{thm:second-order-stability}
Suppose the point \((\bar z,\bar\Gamma)\) is a regular OC solution and \(\nabla_\Gamma\Phi_G(\bar\Gamma)=0\). For any
\(\Gamma_\varepsilon\in U\) with \(\Gamma_\varepsilon-\bar\Gamma=O(\varepsilon)\),
\[
    z_{\Gamma_\varepsilon}-\bar z=O(\varepsilon),
    \qquad
    \hat\mu_G(z_{\Gamma_\varepsilon},\Gamma_\varepsilon)
    -
    \hat\mu_G(\bar z,\bar\Gamma)
    =
    O(\varepsilon),
\]
and
\[
    \Phi_G(\Gamma_\varepsilon)-\Phi_G(\bar\Gamma)
    =
    O(\varepsilon^2).
\]
\end{theorem}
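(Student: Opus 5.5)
The plan is to obtain all three claims from the smooth reduction set up before Theorem~\ref{thm:strong-stationarity}, followed by a Taylor expansion of the reduced objective. At a regular OC solution, the implicit function theorem applied to $\mathcal K_G(z,\Gamma)=0$ gives a $C^2$ solution map $\Gamma\mapsto z_\Gamma$ on $U$ with $z_{\bar\Gamma}=\bar z$. Its derivative is
\[
D_\Gamma z_\Gamma=-\bigl[D_z\mathcal K_G(z_\Gamma,\Gamma)\bigr]^{-1}D_\Gamma\mathcal K_G(z_\Gamma,\Gamma).
\]
I would first shrink $U$, if necessary, to a compact convex neighborhood of $\bar\Gamma$ on which $D_z\mathcal K_G$ stays invertible. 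Continuity of the inverse then gives a uniform bound $\|D_\Gamma z_\Gamma\|\le L_z$ on $U$. Applying the mean value inequality along the segment from $\bar\Gamma$ to $\Gamma_\varepsilon$ yields $\|z_{\Gamma_\varepsilon}-\bar z\|\le L_z\|\Gamma_\varepsilon-\bar\Gamma\|=O(\varepsilon)$, which is the first claim.

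For the second claim I need $\hat\mu_G(z,\Gamma)=Q_G(\Gamma,y,\lambda)$ to be locally Lipschitz (indeed $C^1$) in $(z,\Gamma)$. When $\tau>0$, the objective in \eqref{eq:local_recovery} is strictly convex with positive-definite entropic Hessian, and the linear term $\langle\Gamma+A_G^\top\lambda,\mu\rangle$ enters smoothly. The minimizer is therefore unique and lies in the relative interior, where the entropy barrier keeps it. The implicit function theorem applied to its first-order condition then gives a $C^1$ dependence on $(\Gamma,\lambda)$. The dependence on $y$ through $\mathcal S_G(y)$ is already built into regularity condition (b), which is how $\mathcal K_G$ was asserted to be $C^2$. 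Composing this with the Lipschitz map $\Gamma\mapsto(z_\Gamma,\Gamma)$ gives $\hat\mu_G(z_{\Gamma_\varepsilon},\Gamma_\varepsilon)-\hat\mu_G(\bar z,\bar\Gamma)=O(\varepsilon)$.

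For the value statement, Theorem~\ref{thm:strong-stationarity} says $\Phi_G$ is $C^2$ on $U$. Taylor's theorem with integral remainder along the segment gives
\[
\Phi_G(\Gamma_\varepsilon)-\Phi_G(\bar\Gamma)=\langle\nabla\Phi_G(\bar\Gamma),\Gamma_\varepsilon-\bar\Gamma\rangle+\tfrac12(\Gamma_\varepsilon-\bar\Gamma)^\top\nabla^2\Phi_G(\xi)(\Gamma_\varepsilon-\bar\Gamma).
\]
The first term vanishes by the hypothesis $\nabla_\Gamma\Phi_G(\bar\Gamma)=0$. The Hessian is bounded on the compact neighborhood, so the remainder is at most $\tfrac12\sup_U\|\nabla^2\Phi_G\|\,\|\Gamma_\varepsilon-\bar\Gamma\|^2=O(\varepsilon^2)$.

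The main obstacle is justifying the smoothness being used: that $\hat\mu_G$ is genuinely $C^2$, and in particular that its dependence on the certificate $y$, which indexes a set $\mathcal S_G(y)$, is differentiable. I would handle this by leaning on the definition of a regular OC solution and the $C^2$ conclusion of Theorem~\ref{thm:strong-stationarity}, rather than reproving it. I would also note explicitly that $\varepsilon$ is taken small enough that $\Gamma_\varepsilon$ stays in the shrunken compact neighborhood. That requirement keeps the active set fixed by strict complementarity, so that the local system $\mathcal K_G$ remains the correct description along the whole path.
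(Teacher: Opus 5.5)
Your proposal is correct and takes essentially the same route as the paper's proof. The $C^2$ implicit map $\Gamma\mapsto z_\Gamma$ is locally Lipschitz, which gives the $O(\varepsilon)$ drift in $z$ and, via local smoothness of $\hat\mu_G$, in the recovered marginal; a second-order Taylor expansion of the $C^2$ function $\Phi_G$ with vanishing gradient at $\bar\Gamma$ then gives $O(\varepsilon^2)$, and your compact convex neighborhood with uniform bounds on $D_\Gamma z_\Gamma$ and $\nabla^2\Phi_G$ just makes explicit what the paper's ``locally Lipschitz'' and $O(\|\Gamma_\varepsilon-\bar\Gamma\|^2)$ steps use (the remainder you wrote is the mean-value form rather than the integral form, which works equally well).
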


Theorems~\ref{thm:strong-stationarity} and~\ref{thm:second-order-stability} together characterize the favorable local structure of \(\Phi_G\) on \(U\), and the remaining gap is to bridge the understanding of \(\Phi_G\) to the
true optimum \(p^\star(G)\) of the underlying marginal problem. The corollary supplies this relation.

\begin{corollary}[Suboptimality decomposition]
\label{cor:objective-exact-second-order}
Let $(\bar z,\bar\Gamma)$ be a regular OC solution such that
$\bar\Gamma$ is a stationary point of $\Phi_G$ and $\bar z=z_{\bar\Gamma}$,
and suppose the structure-preservation condition~\eqref{eq:structure_preserving} holds on $U$. Define
\[
\Delta_{\mathrm{cert}}(G,\bar\Gamma)
:=
\Phi_G(\bar\Gamma)-p^*(G),
\]
where $p^*(G)$ is the optimal value of the original unperturbed marginal
problem. Then, for any $\Gamma_\epsilon\in U$ with
$\Gamma_\epsilon-\bar\Gamma=O(\epsilon)$,
\[
\Phi_G(\Gamma_\epsilon)-p^*(G)
=
\Delta_{\mathrm{cert}}(G,\bar\Gamma)+O(\epsilon^2).
\]
If, in addition, $\bar\Gamma$ satisfies the sufficient condition of
Proposition~\ref{prop:linear_sufficiency}, then $\Delta_{\mathrm{cert}}(G,\bar\Gamma)=0$, and
\[
0\le \Phi_G(\Gamma_\epsilon)-p^*(G)=O(\epsilon^2).
\]
\end{corollary}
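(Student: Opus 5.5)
The first claim is an algebraic rearrangement combined with Theorem~\ref{thm:second-order-stability}; the second uses Proposition~\ref{prop:linear_sufficiency} to show the certificate gap vanishes, plus feasibility for the sign.

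\emph{First claim.} I will write
\[
\Phi_G(\Gamma_\epsilon)-p^*(G)=\bigl(\Phi_G(\bar\Gamma)-p^*(G)\bigr)+\bigl(\Phi_G(\Gamma_\epsilon)-\Phi_G(\bar\Gamma)\bigr).
\]
The first bracket is $\Delta_{\mathrm{cert}}(G,\bar\Gamma)$ by definition. The point $(\bar z,\bar\Gamma)$ is a regular OC solution with $\nabla_\Gamma\Phi_G(\bar\Gamma)=0$, so Theorem~\ref{thm:second-order-stability} applies and gives that the second bracket is $O(\epsilon^2)$. This uses only the $C^2$ regularity of $\Phi_G$ on $U$ and the vanishing gradient, via a second-order Taylor expansion with a Hessian bound uniform on a compact subneighborhood.

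\emph{Lower bound $\Phi_G(\Gamma_\epsilon)\ge p^*(G)$.} For $\Gamma\in U$, the state $z_\Gamma=(y_\Gamma,\lambda_\Gamma)$ solves $\mathcal K_G(z_\Gamma,\Gamma)=0$. Because the active set is stable on $U$ (strict complementarity), this means $(y_\Gamma,\lambda_\Gamma)$ is a consistent certificate at $\Gamma$ in the sense of \eqref{eq:certificate_consistency_kkt}. Proposition~\ref{prop:consistent_recovery}, together with the structure-preservation condition \eqref{eq:structure_preserving} assumed on $U$, then gives $\hat\mu_G(z_\Gamma,\Gamma)\in\mathcal M_G$. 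Hence $\Phi_G(\Gamma)=F_G(\hat\mu_G)\ge p^*(G)$. In particular $\Phi_G(\Gamma_\epsilon)-p^*(G)\ge 0$, and also $\Delta_{\mathrm{cert}}\ge0$.

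\emph{Second claim: $\Delta_{\mathrm{cert}}(G,\bar\Gamma)=0$.} I interpret the hypothesis as saying that $\bar\Gamma=B_G^\top z^*$ with $\bar y$ certifying an optimizer $\mu^*\in\mathcal S_G(\bar y)$ and $\bar\lambda=\lambda^*$. Proposition~\ref{prop:linear_sufficiency} then gives $\mu^*\in Q_G(\bar\Gamma,\bar y,\bar\lambda)$.

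I then need $\hat\mu_G(\bar z,\bar\Gamma)=\mu^*$, not just some element of $Q_G$.
\begin{itemize}
\item For $\tau>0$, strict convexity of $F_G$ on $\mathcal S_G(\bar y)$ makes $Q_G$ single-valued, so the recovered marginal equals $\mu^*$.
\item For $\tau=0$, I will use the local uniqueness of the regular solution map (the implicit function theorem gives the unique nearby $z_\Gamma$ with $z_{\bar\Gamma}=\bar z$) and identify $\hat\mu_G(\bar z,\bar\Gamma)$ with $\mu^*$.
\end{itemize}
Either way, $\Phi_G(\bar\Gamma)=F_G(\mu^*)=p^*(G)$, so $\Delta_{\mathrm{cert}}=0$. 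Combining this with the first claim and the lower bound yields $0\le\Phi_G(\Gamma_\epsilon)-p^*(G)=O(\epsilon^2)$.

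\emph{Main obstacle.} The hardest step is matching the Proposition~\ref{prop:linear_sufficiency} data $(y,\lambda^*,\Gamma^*)$ with the reference regular point $(\bar z,\bar\Gamma)$. This means checking that $(\bar y,\lambda^*)$ is consistent, i.e.\ $\bar y=C_G(\mu^*)$ and complementarity holds. The complementarity comes from the KKT conditions in Proposition~\ref{prop:linear_sufficiency}. The identity $\bar y=C_G(\mu^*)$ must be read into the hypothesis that $\bar\Gamma$ ``satisfies the sufficient condition'', and I will state that reading explicitly.
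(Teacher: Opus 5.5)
Your proof is correct and follows the paper's argument. You use the same split $\Phi_G(\Gamma_\epsilon)-p^*(G)=\Delta_{\mathrm{cert}}+[\Phi_G(\Gamma_\epsilon)-\Phi_G(\bar\Gamma)]$ with Theorem~\ref{thm:second-order-stability}, Proposition~\ref{prop:linear_sufficiency} to get $\Phi_G(\bar\Gamma)=p^*(G)$, and structure preservation plus certificate consistency on $U$ (via Proposition~\ref{prop:consistent_recovery}) for $\Phi_G(\Gamma_\epsilon)\ge p^*(G)$, while spelling out matching steps the paper leaves implicit. One small fix is needed in your $\tau=0$ bullet: uniqueness of $z_\Gamma$ from the implicit function theorem says nothing about which element of $Q_G(\bar\Gamma,\bar y,\bar\lambda)$ is returned, so you should identify $\hat\mu_G(\bar z,\bar\Gamma)$ with $\mu^*$ through the local uniqueness and smoothness of the recovery map that the regularity hypotheses already assume.
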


The decomposition identifies two qualitatively different sources of suboptimality. The certificate gap \(\Delta_{\mathrm{cert}}\) is structural, depending on whether the certificate family \((\mathcal S_G(y), C_G)\) and the admissible perturbations can represent the optimum. The certificate must be compatible with the discrete object, since a clique certificate cannot encode a path object. When this compatibility holds and Proposition~\ref{prop:linear_sufficiency}'s condition is satisfied, \(\Delta_{\mathrm{cert}}\) vanishes. The \(O(\varepsilon^2)\) term is the local optimization residual around a stationary perturbation, which training is positioned to shrink. Proposition~\ref{prop:linear_sufficiency} identifies an ideal perturbation \(\Gamma^\star = B_G^\top z^\star\), but predicting \(z^\star\) accurately is essentially as hard as characterizing the exponentially many omitted valid inequalities. The corollary shows that this is not restrictive: model training on \(\mathcal L_{\mathrm{NCP}}\) only needs to reach a neighborhood of a stationary point of \(\Phi_G\), where second-order stability ensures an \(O(\varepsilon^2)\) value residual. Thus, when the certificate family is compatible and the network reaches this local regime, NCP can recover nearly optimal values for the original combinatorial problem.

\section{Experiments}
\paragraph{Instantiations of NCP.} We instantiate NCP on three representative CO problems that expose different missing structures. Details on specific NCP function for each class, network architectures, instance generation, training hyperparameters, and decoding rules can be found in Appendix~\ref{app:details}. Across all settings, the model predicts a perturbation object \(\Gamma\), the recovery layer computes \(\hat\mu_G=Q_G(\Gamma,y,\lambda)\), and training evaluates the unperturbed objective \(F_G(\hat\mu_G)\), so the perturbation acts only inside recovery and certificate consistency while all reported objectives use the original costs. In generalized assignment problems (GAP), NCP predicts item--machine perturbations, solves a capacity-dual recovery layer on perturbed costs, and uses the resulting reduced scores to form candidate assignments repaired by CP-SAT under the original costs and constraints. In maximum independent set problems (MIS), NCP predicts vertex perturbations, runs projected primal-dual recovery on an edge or clique-cover relaxation, and decodes an independent set from the recovered scores. In elementary shortest path problem with negative cycles (ESPP-NC), NCP predicts edge-cost perturbations, uses node-potential certificates to induce a directed acyclic recovery structure, and applies a discounted smoothed Bellman layer to produce transition probabilities and expected edge flows. We conduct out-of-distribution tests. For MIS, cross-dataset \textbf{zero-shot transfer results} are reported in Appendix~\ref{Appendix:Out}; for GAP and ESPP-NC, training and testing are conducted on different instance sizes and the results \textbf{are already indicative of out-of-distribution performances}.

\subsection{Main results}

\begin{table}[h!]
\scriptsize
\caption{In-distribution MIS on three benchmarks}
\label{tab:mis-main}
\centering
\setlength{\tabcolsep}{4.7pt}
\renewcommand{\arraystretch}{0.92}
\begin{tabular}{@{}l ccc ccc ccc@{}}
\toprule
& \multicolumn{3}{c}{TWITTER}
& \multicolumn{3}{c}{COLLAB}
& \multicolumn{3}{c}{IMDB} \\
\cmidrule(lr){2-4} \cmidrule(lr){5-7} \cmidrule(l){8-10}
Method & Ratio & Exact & Time
       & Ratio & Exact & Time
       & Ratio & Exact & Time \\
       &       &       & (ms)
       &       &       & (ms)
       &       &       & (ms) \\
\midrule
Gurobi        & -- & --      & 23.6
              & -- & --      & 12.7
              & -- & --      & 0.669 \\
KaMIS \cite{lamm2016finding}          & 1.0000 & 196/196   & 77.2
              & 1.0000 & 1000/1000 & 90.2
              & 1.0000 & 200/200   & 12.7 \\
\midrule
Erdos-Neural  & 0.9558 & 52/196    & 1.81
              & 0.9976 & 966/1000  & 0.910
              & 0.4373 & 30/200    & 0.110 \\
DIFUSCO       & 0.9809 & 92/196    & 303
              & 0.9980 & 970/1000  & 323
              & 0.9967 & 198/200   & 266 \\
Fast-T2T      & 0.9876 & 106/196   & 30.6
              & \textbf{0.9992} & 984/1000 & 26.7
              & 0.9978 & 198/200   & 22.7 \\
\midrule
NCP 
              & \textbf{0.9881} & \textbf{120/196} & 1.93
              & 0.9984 & \textbf{992/1000} & 0.970
              & \textbf{1.0000} & \textbf{200/200} & 0.140 \\
\bottomrule
\end{tabular}
\end{table}
\paragraph{MIS}
Tables~\ref{tab:mis-main} and~\ref{tab:wmis-main} summarize the results on unweighted and node-weighted MIS across TWITTER \cite{jure2014snap}, COLLAB \cite{yanardag2015deep, kersting2020benchmark}, and IMDB \cite{yanardag2015deep, kersting2020benchmark}. On the unweighted task, NCP is competitive with the strongest neural baselines while keeping the inference time close to single-pass methods. The advantage becomes clearer in the weighted setting. The performance of Erdos-Neural \cite{karalias2020erdos}, DIFUSCO \cite{sun2023difusco}, and Fast-T2T \cite{li2024fast} drops substantially on COLLAB and IMDB, whereas NCP remains above \(0.97\) on all three benchmarks and recovers the exact solution in most instances. These results suggest that NCP is not only efficient, but also more robust when the objective changes from counting selected nodes to optimizing weighted structure. 


\begin{table}[t]
\centering
\scriptsize
\caption{In-distribution vertex-weighted MIS on three benchmarks with
per-node weights $w\!\sim\!\mathcal U(0,1)$.}
\label{tab:wmis-main}
\setlength{\tabcolsep}{4.7pt}
\renewcommand{\arraystretch}{0.92}
\begin{tabular}{@{}l ccc ccc ccc@{}}
\toprule
& \multicolumn{3}{c}{TWITTER}
& \multicolumn{3}{c}{COLLAB}
& \multicolumn{3}{c}{IMDB} \\
\cmidrule(lr){2-4} \cmidrule(lr){5-7} \cmidrule(l){8-10}
Method & Ratio & Exact & Time
       & Ratio & Exact & Time
       & Ratio & Exact & Time \\
       &       &       & (ms)
       &       &       & (ms)
       &       &       & (ms) \\
\midrule
Gurobi
       & -- & -- & 32.2
       & -- & -- & 16.2
       & -- & -- & 0.65 \\
KaMIS 
       & 1.0000 & 196/196 & 112.6
       & 1.0000 & 1000/1000 & 92.7
       & 1.0000 & 200/200 & 14.4 \\
\midrule
Erdos-Neural
       & 0.8480 & 5/196 & 1.81
       & 0.6689 & 10/1000 & 0.910
       & 0.2656 & 2/200 & 0.110 \\
DIFUSCO
       & 0.8497 & 6/196 & 303
       & 0.6590 & 4/1000 & 323
       & 0.5928 & 2/200 & 266 \\
Fast-T2T
       & 0.8797 & 6/196 & 30.6
       & 0.7369 & 13/1000 & 26.7
       & 0.6661 & 4/200 & 22.7 \\
\midrule
NCP
       & \textbf{0.9795} & \textbf{39/196} & 1.93
       & \textbf{0.9768} & \textbf{941/1000} & 0.970
       & \textbf{0.9887} & \textbf{196/200} & 0.140 \\
\bottomrule
\end{tabular}
\end{table}

\paragraph{GAP}
Table~\ref{tab:gap-baselines} reports the mean gap to the ORLIB \cite{beasley1990or} optimum after 5-second CP-SAT repair on the top-\(k\) candidate mask. NCP improves over LP-round for all \(k\) and achieves the best mean gap for \(k=1\) and \(k=2\). At \(k=3\), it remains close to Fast-T2T while reducing runtime from \(28.420\) seconds to \(0.123\) seconds. With \(k=2\), NCP solves \(32/60\) instances exactly, compared with \(17/60\) for Fast-T2T and \(13/60\) for DIFUSCO. These results suggest that \(\Gamma\) produces a compact candidate set whose LP solution is already close to the integer optimum.


\begin{table}[h!]
\centering
\scriptsize
\setlength{\tabcolsep}{11pt}
\renewcommand{\arraystretch}{0.90}
\caption{
\textbf{Mean gap (\%)} is the relative gap to the ORLIB optimum after CP-SAT
repair with a 5\,s budget on the candidate mask.}
\label{tab:gap-baselines}
\begin{tabular}{@{}l ccc ccc c@{}}
\toprule
\multirow{2}{*}{Method}
& \multicolumn{3}{c}{Mean Gap (\%)}
& \multicolumn{3}{c}{Exact}
& \multirow{2}{*}{Time\ (s)} \\
\cmidrule(lr){2-4}\cmidrule(lr){5-7}
& $k=1$ & $k=2$ & $k=3$
& $k=1$ & $k=2$ & $k=3$
& \\
\midrule
LP-round
& 1.2037 & 0.4840 & 0.2716
& 0/60 & 5/60 & 13/60
& 0.056 \\
Erdos-Neural
& 3.5931 & 2.6334 & 2.0377
& 0/60 & 0/60 & 1/60
& 0.042 \\
DIFUSCO
& 5.2358 & 0.5609 & 0.0361
& 0/60 & 13/60 & 47/60
& 12.142 \\
Fast-T2T
& 3.7490 & 0.3974 & \textbf{0.0153}
& 0/60 & 17/60 & 54/60
& 28.420 \\
\midrule
NCP
& \textbf{1.0365} & \textbf{0.1195} & 0.0172
& 0/60 & \textbf{32/60} & \textbf{52/60}
& 0.123 \\
\bottomrule
\end{tabular}
\end{table}

\paragraph{ESPP-NC} 
Table~\ref{tab:er-ba-transfer-sweep} reports best path costs on ER~\cite{erdds1959random, gilbert1959random} and BA~\cite{barabasi1999emergence} graphs with \(n\in\{50,100,200,500\}\), where lower values are better. Exact labeling gives the strongest result at the smallest size but does not scale within the time budget. Among scalable methods, NCP remains close to Label-1M at \(n=50\) and consistently outperforms LP+Dec. on larger graphs. The advantage becomes more pronounced as the graph size increases. This suggests that the learned certificate structure remains effective beyond the small instances where exact labeling is available, and that NCP provides a more scalable mechanism than LP-based decoding.

\begin{table*}[t]
\setlength{\tabcolsep}{4pt}
\renewcommand{\arraystretch}{0.90}
\centering
\scriptsize
\caption{Performance and runtime comparison on ER and BA graphs (time in parentheses).}
\label{tab:er-ba-transfer-sweep}
\setlength{\tabcolsep}{4.8pt}
\renewcommand{\arraystretch}{0.92}
\begin{tabular}{lcccccccc}
\toprule
& \multicolumn{2}{c}{$n=50$} & \multicolumn{2}{c}{$n=100$} & \multicolumn{2}{c}{$n=200$} & \multicolumn{2}{c}{$n=500$} \\
\cmidrule(lr){2-3}\cmidrule(lr){4-5}\cmidrule(lr){6-7}\cmidrule(lr){8-9}
Method & ER & BA & ER & BA & ER & BA & ER & BA \\
\midrule
Label-1M & \makecell{\textbf{-32.59} \\ (59.5s)} & \makecell{-17.22 \\ (33.9s)} & -- & -- & -- & -- & -- & -- \\
Beam-256 & \makecell{-30.65 \\ (30.02s)} & \makecell{\textbf{-17.79} \\ (30.02s)} & \makecell{-56.38 \\ (30.02)} & \makecell{-40.79 \\ (30.2s)} & \makecell{-103.14 \\ (30.01s)} & \makecell{-85.87 \\ (30.04s)} & \makecell{-161.13 \\ (30.05s)} & \makecell{-79.69 \\ (30.01s)} \\
Beam-512 & \makecell{-31.32 \\ (30.02s)} & \makecell{-16.50 \\ (30.01s)} & \makecell{-57.35 \\ (30.01)} & \makecell{-40.78 \\ (30.02s)} & \makecell{-103.11 \\ (1.572s)} & \makecell{-67.68 \\ (30.03s)} & \makecell{-138.05 \\ (30.04s)} & \makecell{-63.69 \\ (30.08s)} \\
\midrule
LP+Dec. & \makecell{-24.15 \\ (0.115s)} & \makecell{-14.60 \\ (0.060s)} & \makecell{-50.78 \\ (0.312s)} & \makecell{-37.51 \\ (0.175s)} & \makecell{-102.57 \\ (0.920s)} & \makecell{-85.13 \\ (0.518s)} & \makecell{-259.30 \\ (4.89s)} & \makecell{-236.32 \\ (2.68s)} \\
Erdos & \makecell{-2.20 \\ (0.010s)} & \makecell{-1.60 \\ (0.009s)} & \makecell{-2.82 \\ (0.016s)} & \makecell{-2.28 \\ (0.012s)} & \makecell{-4.03 \\ (0.029s)} & \makecell{-2.68 \\ (0.021s)} & \makecell{-6.59 \\ (0.110s)} & \makecell{-4.67 \\ (0.054s)} \\
\midrule
NCP & \makecell{-31.22 \\ (0.129s)} & \makecell{-16.41 \\ (0.069s)} & \makecell{\textbf{-71.21} \\ (0.316s)} & \makecell{\textbf{-47.97} \\ (0.193s)} & \makecell{\textbf{-151.17} \\ (0.818s)} & \makecell{\textbf{-120.16} \\ (0.525s)} & \makecell{\textbf{-395.11} \\ (3.40s)} & \makecell{\textbf{-349.10} \\ (2.05s)} \\
\bottomrule
\end{tabular}
\end{table*}

\subsection{Validating the structural properties}~\label{subsec:ncl-validation}
\noindent\textbf{Component ablation.}
Table~\ref{tab:ncl-ablation-combined} ablates the main components of NCP on weighted MIS. Removing the structure certificate causes the largest degradation on COLLAB and IMDB, showing that the certificate carries the main structural signal. Randomizing \(\Gamma\) also substantially weakens performance, suggesting that the certificate benefits from a learned perturbation rather than an arbitrary one. The degradations from removing certificate consistency and decoding directly from \(\Gamma\) further show that the primal signal is recovered through \(\hat\mu = Q_G(\Gamma,y,\lambda)\), not from \(\Gamma\) alone.

\begin{table}[t]
  \centering
  \setlength{\tabcolsep}{3.5pt}
    \scriptsize
  \caption{NCP component ablation on weighted-MIS. Full model reports the original performance, while ablated variants report degradation relative to the full model. Larger degradation indicates stronger performance loss.}
  \label{tab:ncl-ablation-combined}
  \begin{tabular}{lcccccc}
    \toprule
    Variant 
    & \multicolumn{2}{c}{Twitter} 
    & \multicolumn{2}{c}{COLLAB} 
    & \multicolumn{2}{c}{IMDB} \\
    \cmidrule(lr){2-3}\cmidrule(lr){4-5}\cmidrule(lr){6-7}
    & Ratio & Exact 
    & Ratio & Exact 
    & Ratio & Exact \\
    \midrule
    NCP (Full model) 
    & 0.9795 & 39/196 
    & 0.9768 & 941/1000 
    & 0.9887 & 196/200 \\
    \midrule
    w/o certificate 
    & $\downarrow$0.1315 & $\downarrow$17.3 pp (34) 
    & $\downarrow$0.3079 & $\downarrow$93.1 pp (931) 
    & $\downarrow$0.7231 & $\downarrow$97.0 pp (194) \\

    w/o perturb 
    & $\downarrow$0.0403 & $\downarrow$11.7 pp (23) 
    & $\downarrow$0.0195 & $\downarrow$24.8 pp (248) 
    & $\downarrow$0.0720 & $\downarrow$27.5 pp (55) \\

    Random perturb 
    & $\downarrow$0.0978 & $\downarrow$18.4 pp (36) 
    & $\downarrow$0.1580 & $\downarrow$91.2 pp (912) 
    & $\downarrow$0.2203 & $\downarrow$91.5 pp (183) \\

    w/o consistency 
    & $\downarrow$0.0422 & $\downarrow$12.2 pp (24) 
    & $\downarrow$0.0935 & $\downarrow$65.6 pp (656) 
    & $\downarrow$0.0772 & $\downarrow$35.0 pp (70) \\

    Decoder by perturb 
    & $\downarrow$0.2487 & $\downarrow$19.4 pp (38) 
    & $\downarrow$0.3145 & $\downarrow$93.4 pp (934) 
    & $\downarrow$0.4086 & $\downarrow$95.0 pp (190) \\
    \bottomrule
  \end{tabular}
\end{table}

\begin{figure}
    \centering
    \resizebox{\linewidth}{!}{%
        \includegraphics[height=0.5cm]{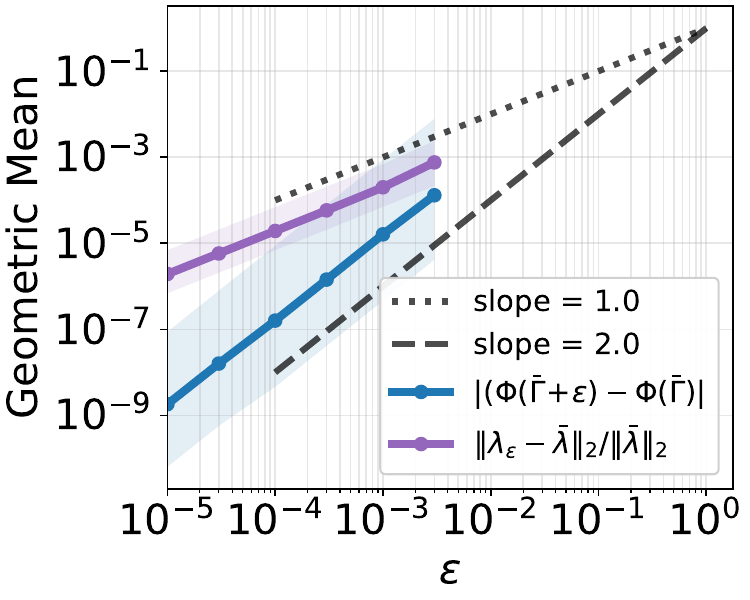}\hspace{-0.05cm}
        \includegraphics[height=0.5cm]{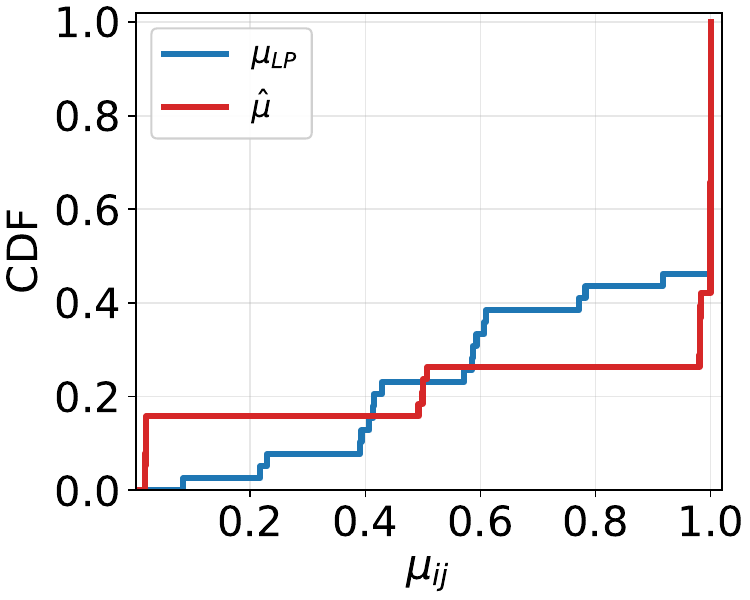}\hspace{-0.05cm}
        \includegraphics[height=0.5cm]{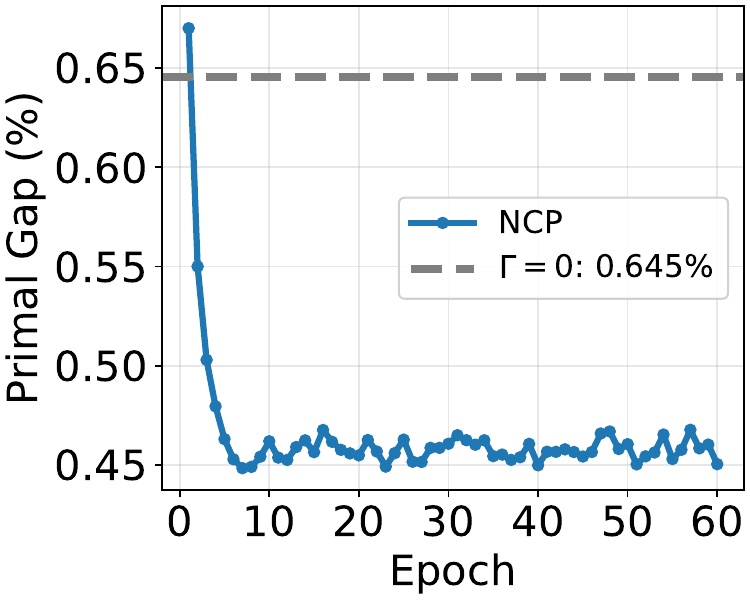}
    }
        \caption{Structural diagnostics on GAP.
        \textbf{Left:} test-set verification of
        Theorem~\ref{thm:second-order-stability}. Around the trained
        \(\bar\Gamma=\Gamma_\theta(G)\), additive perturbations of magnitude
        \(\varepsilon\) induce \(O(\varepsilon)\) drift in the certificate
        state \(\lambda\) and \(O(\varepsilon^2)\) drift in the reduced
        objective \(\Phi\), matching the predicted slopes of \(1\) and \(2\)
        over four orders of magnitude in \(\varepsilon\).
        \textbf{Middle:} non-zero marginal CDFs on instance \texttt{c1060-1}: the
        LP-relaxation marginal \(\mu_{\mathrm{LP}}\) is spread across the
        fractional interior, while the NCP-recovered marginal \(\hat\mu\)
        concentrates near \(\{0,1\}\), evidencing the integerizing role of
        the learned \(\Gamma\) anticipated by
        Proposition~\ref{prop:linear_sufficiency}.
        \textbf{Right:} primal-gap evolution on the test set; at every epoch
        we run the certificate fixed-point and recover \(\hat\mu\). NCP
        drops below Lagrangian bound (\(\Gamma=0\)) within five epochs and stabilizes near \(0.46\%\).}
    \label{fig:gap_results}
\end{figure}

\paragraph{Learning shifts the primal bound below the Lagrangian.}
Figure~\ref{fig:gap_results} (\textbf{Right}) traces the primal gap on GAP over training. The dashed line at \(\Gamma=0\) is the canonical Lagrangian relaxation  with the same recovery layer. NCP drops below this bound within five epochs and stabilizes near \(0.46\%\), supplying the residual price action that the dual prices alone cannot represent, which is exactly the role assigned to \(\Gamma\) by Proposition~\ref{prop:linear_sufficiency}.

\paragraph{Second-order value stability.}
Theorem~\ref{thm:second-order-stability} asserts that, near a stationary point \(\bar\Gamma\), an \(O(\varepsilon)\) perturbation of the input induces \(O(\varepsilon)\) drift in the certificate state and \(O(\varepsilon^2)\) drift in the reduced objective. Figure~\ref{fig:gap_results} (\textbf{Left}) reports the geometric mean of \(|\Phi(\bar\Gamma+\varepsilon)-\Phi(\bar\Gamma)|\) and \(\|\lambda_\varepsilon-\bar\lambda\|_2/\|\bar\lambda\|_2\) as \(\varepsilon\) varies. The observed slopes are close to \(2\) for the reduced objective and close to \(1\) for the certificate state, matching the predicted local scaling over the tested range of \(\varepsilon\). The reduced objective therefore behaves as a smooth surface in \(\Gamma\) within the regular regime, and gradient descent on \(\mathcal L_{\mathrm{NCP}}\) can shrink the \(O(\varepsilon^2)\) residual in Corollary~\ref{cor:objective-exact-second-order} without destabilizing the certificate state.

\bibliography{reference}

\clearpage
\appendix

\section{Derivation of the Variational Outer Objective}
\label{app:kl_derivation}

For $\tau>0$, define the Gibbs distribution over feasible configurations $x\in\mathcal X_G$ by
\[
    P_\tau(x)
    =
    \frac{\exp(-c^\top\phi(x)/\tau)}{Z_\tau},
    \qquad
    Z_\tau
    =
    \sum_{x\in\mathcal X(G)}
    \exp(-c^\top\phi(x)/\tau).
\]
For any distribution $q$ over $\mathcal X_G$, let
\[
    \mu_q=\mathbb E_q[\phi(x)].
\]
Then
\[
\begin{aligned}
    \mathrm{KL}(q\|P_\tau)
    &=
    \sum_x q(x)\log\frac{q(x)}{P_\tau(x)}
    \\
    &=
    -H(q)
    +
    \frac{1}{\tau}\mathbb E_q[c^\top\phi(x)]
    +
    \log Z_\tau
    \\
    &=
    -H(q)
    +
    \frac{1}{\tau}\langle c,\mu_q\rangle
    +
    \log Z_\tau .
\end{aligned}
\]
For a fixed mean parameter $\mu$, minimizing this expression over all
distributions $q$ satisfying $\mathbb E_q[\phi]=\mu$ is equivalent to maximizing its entropy. By definition,
\[
    H_{\mathcal M}(\mu)
    =
    \sup_{q:\mathbb E_q[\phi]=\mu} H(q).
\]
Therefore,
\[
\begin{aligned}
    \inf_q \mathrm{KL}(q\|P_\tau)
    &=
    \inf_{\mu\in\mathcal M_G}
    \left\{
        \frac{1}{\tau}\langle c,\mu\rangle
        -
        H_{\mathcal M}(\mu)
    \right\}
    +
    \log Z_\tau
    \\
    &=
    \frac{1}{\tau}
    \inf_{\mu\in\mathcal M_G}
    \left\{
        \langle c,\mu\rangle
        -
        \tau H_{\mathcal M}(\mu)
    \right\}
    +
    \log Z_\tau .
\end{aligned}
\]
Thus, for $\tau>0$, minimizing
\[
    F_G(\mu)
    =
    \langle c,\mu\rangle-\tau H_{\mathcal M}(\mu)
\]
over the exact marginal polytope is equivalent, up to the positive scaling factor $1/\tau$ and the additive constant $\log Z_\tau$, to minimizing $\mathrm{KL}(q\|P_\tau)$ over distributions on feasible configurations. When $\tau=0$, the same objective reduces to the original linear combinatorial objective. 

\section{Proofs and additional materials for Section~\ref{sec:framework}}

\subsection{Proof of Proposition~\ref{prop:linear_sufficiency}}
\label{app:proof_linear_sufficiency}
\noindent\textbf{Proposition~\ref{prop:linear_sufficiency} 
(Sufficiency of linear cost perturbation).}
\textit{Fix a certificate $y$ such that an optimizer
$\mu^*$ belongs to $S_G(y)$.
Suppose the valid inequalities omitted from the recovery layer in certifying
$\mu^*$ can be written as $B_G\mu\le d_G$, and let $(\lambda^*,z^*)$ be KKT
multipliers at $\mu^*$ for minimizing $F_G$ over $S_G(y)$ subject to
$A_G\mu\le b_G$ and $B_G\mu\le d_G$. Then,
\[
\Gamma^*:=B_G^\top z^*
\quad\Longrightarrow\quad
\mu^*\in Q_G(\Gamma^*,y,\lambda^*).
\]}


\begin{proof}
Consider the certificate-restricted convex problem
\[
\min_{\mu\in S_G(y)} F_G(\mu)
\quad \mathrm{s.t.}\quad
A_G\mu\le b_G,\qquad B_G\mu\le d_G .
\]
By the assumption that the omitted valid inequalities complete the recovery
description around the certificate $y$, $\mu^*$ is an optimizer of this problem.
Since $(\lambda^*,z^*)$ are KKT multipliers at $\mu^*$ for this problem, there
exists a subgradient $\xi^*\in\partial F_G(\mu^*)$ such that
\[
0\in \xi^*+A_G^\top\lambda^*+B_G^\top z^*+N_{S_G(y)}(\mu^*).
\]
Equivalently, for every \(\mu\in S_G(y)\),
\[
\left\langle
\xi^*+A_G^\top\lambda^*+B_G^\top z^*,
\mu-\mu^*
\right\rangle\ge 0.
\]
Define
\[
\Gamma^*:=B_G^\top z^* .
\]
Then, for every \(\mu\in S_G(y)\),
\[
\left\langle
\xi^*+\Gamma^*+A_G^\top\lambda^*,
\mu-\mu^*
\right\rangle\ge 0.
\]
This is the first-order optimality condition for the convex recovery problem
\[
\min_{\mu\in S_G(y)}
\left\{
F_G(\mu)+
\left\langle \Gamma^*+A_G^\top\lambda^*,\mu\right\rangle
\right\}.
\]
Therefore,
\[
\mu^*\in Q_G(\Gamma^*,y,\lambda^*).
\]
\end{proof}

\subsection{Proof of Proposition~\ref{prop:consistent_recovery}}
\label{app:proof_consistent_recovery}
\noindent\textbf{Proposition~\ref{prop:consistent_recovery} (Feasibility and shifted optimality of consistent recovery)} 
\textit{Let $\hat\mu=Q_G(\Gamma,y,\lambda)$ and suppose $(y,\lambda)$ is a consistent
certificate at perturbation $\Gamma$. Then 
\[
\hat\mu\in\mathcal M_G.
\]
Moreover, for every $\nu\in\mathcal S_G(y)\cap\mathcal P_G$,
\begin{equation}
    F_G(\hat\mu)-F_G(\nu)
    \le
    \langle \Gamma,\nu-\hat\mu\rangle
\end{equation}}

\begin{proof}
Because $\hat\mu=Q_G(\Gamma,y,\lambda)$, we have
$\hat\mu\in\mathcal S_G(y)$. Since $(y,\lambda)$ is a consistent
certificate at perturbation $\Gamma$, we also have
\[
    A_G\hat\mu\le b_G .
\]
Together with $\mathcal S_G(y)\subseteq\mathcal P_{\mathrm{loc}}(G)$,
this implies
\[
    \hat\mu\in\mathcal S_G(y)\cap\mathcal P_G .
\]
By the structure-preserving property
\[
    \mathcal S_G(y)\cap\mathcal P_G\subseteq\mathcal M_G,
\]
we obtain
\[
    \hat\mu\in\mathcal M_G .
\]

It remains to prove the shifted optimality inequality. Since
$\hat\mu \in Q_G(\Gamma,y,\lambda)$, for every
$\nu\in\mathcal S_G(y)$,
\[
    F_G(\hat\mu)
    +
    \left\langle
        \Gamma+A_G^\top\lambda,\hat\mu
    \right\rangle
    \le
    F_G(\nu)
    +
    \left\langle
        \Gamma+A_G^\top\lambda,\nu
    \right\rangle .
\]
Rearranging gives
\[
    F_G(\hat\mu)-F_G(\nu)
    \le
    \langle \Gamma,\nu-\hat\mu\rangle
    +
    \lambda^\top A_G(\nu-\hat\mu).
\]
Now take any $\nu\in\mathcal S_G(y)\cap\mathcal P_G$. Then
$A_G\nu\le b_G$. Since consistency also gives
$\lambda\ge0$ and
$\lambda^\top(A_G\hat\mu-b_G)=0$, we have
\[
    \lambda^\top A_G(\nu-\hat\mu)
    =
    \lambda^\top(A_G\nu-b_G)
    -
    \lambda^\top(A_G\hat\mu-b_G)
    \le 0 .
\]
Therefore
\[
    F_G(\hat\mu)-F_G(\nu)
    \le
    \langle \Gamma,\nu-\hat\mu\rangle .
\]
This proves the claim.
\end{proof}

\subsection{Implicit differentiation through certificate consistency}
\label{app:implicit_gradient}

Let \(\mathcal T^\alpha_{\Gamma,G}\) denote the damped update in
\eqref{eq:NCP_forward_iteration}, and define the fixed-point residual
\[
    R_{\Gamma,G}(z)
    =
    z-\mathcal T^\alpha_{\Gamma,G}(z).
\]
At a fixed point \(z^\star\), implicit differentiation is well defined under
standard nondegeneracy conditions. We require the structured recovery solution
to be locally unique and differentiable in \((\Gamma,y,\lambda)\), the projection
update to have a locally fixed active set, and
\(\partial_z R_{\Gamma_\theta(G),G}(z^\star)\) to be nonsingular
~\citep{dontchev2009implicit}. In our setting, entropy regularization gives a
unique smooth recovery, the projection is piecewise affine under a fixed active
set, and local stability of the damped update gives the required nonsingularity.

Under these conditions, the implicit function theorem
~\citep{krantz2002implicit} gives
\begin{equation}
    \frac{d z^\star}{d\theta}
    =
    -\left[\partial_z R_{\Gamma_\theta(G),G}(z^\star)\right]^{-1}
    \partial_\Gamma R_{\Gamma_\theta(G),G}(z^\star)
    \frac{\partial\Gamma_\theta(G)}{\partial\theta}.
    \label{eq:NCP_implicit_gradient}
\end{equation}
The gradient of \(\mathcal L_{\mathrm{NCP}}\) follows by differentiating
\[
    \mu_\theta(G)
    =
    Q_G(\Gamma_\theta(G),y^\star,\lambda^\star)
\]
together with \eqref{eq:NCP_implicit_gradient}. This avoids backpropagation
through long unrolled fixed-point iterations. Finite unrolling can also be used
as a direct computational surrogate, as in standard differentiable optimization
and equilibrium layers
~\citep{amos2017optnet,agrawal2019differentiable,bai2019deep}.

\section{Proofs for Section~\ref{sec:theory}}
\label{app:optimization-proofs}

\subsection{Proof of Theorem~\ref{thm:strong-stationarity}}
\noindent\textbf{Theorem~\ref{thm:strong-stationarity} (Reduced stationarity of OC)}
\textit{At a regular OC solution, the implicit reduction defines a \(C^2\)
objective \(\Phi_G\) on \(U\). Moreover,
\[
    \nabla_\Gamma\Phi_G(\bar\Gamma)=0
\]
if and only if \((\bar z,\bar\Gamma)\) satisfies the KKT stationarity
conditions of the local equality-constrained OC problem
\[
    \min_{z,\Gamma}
    F_G(\hat\mu_G(z,\Gamma))
    \quad
    \mathrm{s.t.}
    \quad
    \mathcal K_G(z,\Gamma)=0 .
\]
}

\begin{proof}
Since \((\bar z,\bar\Gamma)\) is a regular OC point,
\[
    \mathcal K_G(\bar z,\bar\Gamma)=0
    \qquad\text{and}\qquad
    \partial_z\mathcal K_G(\bar z,\bar\Gamma)
    \text{ is nonsingular}.
\]
By the implicit function theorem applied to
\[
    \mathcal K_G(z,\Gamma)=0,
\]
there exist neighborhoods of \(\bar z\) and \(\bar\Gamma\), and a locally
unique \(C^2\) solution map
\[
    \Gamma\mapsto z_\Gamma
\]
such that
\[
    \mathcal K_G(z_\Gamma,\Gamma)=0,
    \qquad
    z_{\bar\Gamma}=\bar z .
\]
Differentiating \(\mathcal K_G(z_\Gamma,\Gamma)=0\) with respect to
\(\Gamma\) gives
\begin{equation}
    \frac{d z_\Gamma}{d\Gamma}
    =
    -
    \bigl(\partial_z\mathcal K_G(z_\Gamma,\Gamma)\bigr)^{-1}
    \partial_\Gamma\mathcal K_G(z_\Gamma,\Gamma).
    \label{eq:proof-implicit-sensitivity}
\end{equation}

The reduced objective is
\[
    \Phi_G(\Gamma)
    =
    F_G\bigl(\hat\mu_G(z_\Gamma,\Gamma)\bigr).
\]
Applying the chain rule and substituting
\eqref{eq:proof-implicit-sensitivity} yields
\begin{equation}
\begin{aligned}
    \nabla_\Gamma\Phi_G(\Gamma)
    &=
    \nabla_\Gamma
    \bigl[F_G\circ\hat\mu_G\bigr](z_\Gamma,\Gamma)
    \\
    &\quad
    -
    \bigl(\partial_\Gamma\mathcal K_G(z_\Gamma,\Gamma)\bigr)^\top
    \bigl(\partial_z\mathcal K_G(z_\Gamma,\Gamma)\bigr)^{-\top}
    \nabla_z
    \bigl[F_G\circ\hat\mu_G\bigr](z_\Gamma,\Gamma).
\end{aligned}
\label{eq:proof-reduced-gradient}
\end{equation}
This is the implicit-gradient formula for the reduced objective.

It remains to relate reduced stationarity to KKT stationarity of the local
equality-constrained OC problem
\[
    \min_{z,\Gamma}
    \quad
    F_G\bigl(\hat\mu_G(z,\Gamma)\bigr)
    \qquad
    \mathrm{s.t.}
    \qquad
    \mathcal K_G(z,\Gamma)=0 .
\]
The KKT stationarity equations are
\[
    \nabla_z
    \bigl[F_G\circ\hat\mu_G\bigr](z,\Gamma)
    +
    \bigl(\partial_z\mathcal K_G(z,\Gamma)\bigr)^\top \xi
    =
    0,
\]
and
\[
    \nabla_\Gamma
    \bigl[F_G\circ\hat\mu_G\bigr](z,\Gamma)
    +
    \bigl(\partial_\Gamma\mathcal K_G(z,\Gamma)\bigr)^\top \xi
    =
    0.
\]
Since \(\partial_z\mathcal K_G(z,\Gamma)\) is nonsingular locally, the first
stationarity equation uniquely determines
\[
    \xi^\star
    =
    -
    \bigl(\partial_z\mathcal K_G(z,\Gamma)\bigr)^{-\top}
    \nabla_z
    \bigl[F_G\circ\hat\mu_G\bigr](z,\Gamma).
\]
Substituting this multiplier into the second stationarity equation gives
exactly
\[
    \nabla_\Gamma\Phi_G(\Gamma)=0
\]
by \eqref{eq:proof-reduced-gradient}. Therefore reduced stationarity is
equivalent to KKT stationarity of the local equality-constrained OC problem.
Evaluating at \((\bar z,\bar\Gamma)\) proves the theorem.
\end{proof}
\subsection{Proof of Theorem~\ref{thm:second-order-stability}}

\noindent\textbf{Theorem~\ref{thm:second-order-stability} (Second-order value stability)}
\textit{Suppose the point \((\bar z,\bar\Gamma)\) is a regular OC solution and \(\nabla_\Gamma\Phi_G(\bar\Gamma)=0\). For any
\(\Gamma_\varepsilon\in U\) with \(\Gamma_\varepsilon-\bar\Gamma=O(\varepsilon)\),
\[
    z_{\Gamma_\varepsilon}-\bar z=O(\varepsilon),
    \qquad
    \hat\mu_G(z_{\Gamma_\varepsilon},\Gamma_\varepsilon)
    -
    \hat\mu_G(\bar z,\bar\Gamma)
    =
    O(\varepsilon),
\]
and
\[
    \Phi_G(\Gamma_\varepsilon)-\Phi_G(\bar\Gamma)
    =
    O(\varepsilon^2).
\]
}

\begin{proof}
By Theorem~\ref{thm:strong-stationarity}, the local solution map
\[
    \Gamma\mapsto z_\Gamma
\]
is \(C^2\). Hence it is locally Lipschitz. If
\[
    \Gamma_\varepsilon-\bar\Gamma=O(\varepsilon),
\]
then
\[
    z_{\Gamma_\varepsilon}-z_{\bar\Gamma}=O(\varepsilon).
\]
Since \(z_{\bar\Gamma}=\bar z\), this gives
\[
    z_{\Gamma_\varepsilon}-\bar z=O(\varepsilon).
\]
Because \(\hat\mu_G\) is \(C^2\) locally,
\[
    \hat\mu_G(z_{\Gamma_\varepsilon},\Gamma_\varepsilon)
    -
    \hat\mu_G(\bar z,\bar\Gamma)
    =
    O(\varepsilon).
\]

The reduced objective \(\Phi_G\) is \(C^2\). Taylor expansion at
\(\bar\Gamma\) gives
\[
    \Phi_G(\Gamma_\varepsilon)
    =
    \Phi_G(\bar\Gamma)
    +
    \nabla_\Gamma\Phi_G(\bar\Gamma)^\top
    (\Gamma_\varepsilon-\bar\Gamma)
    +
    O(\|\Gamma_\varepsilon-\bar\Gamma\|^2).
\]
Since
\[
    \nabla_\Gamma\Phi_G(\bar\Gamma)=0
\]
and
\[
    \Gamma_\varepsilon-\bar\Gamma=O(\varepsilon),
\]
we obtain
\[
    \Phi_G(\Gamma_\varepsilon)-\Phi_G(\bar\Gamma)
    =
    O(\varepsilon^2).
\]
\end{proof}

\subsection{Proof of Corollary~\ref{cor:objective-exact-second-order}}

\noindent\textbf{Corollary~\ref{cor:objective-exact-second-order} (Suboptimality decomposition)}
\textit{Let $(\bar z,\bar\Gamma)$ be a regular OC solution such that
$\bar\Gamma$ is a stationary point of $\Phi_G$ and $\bar z=z_{\bar\Gamma}$,
and suppose the structure-preservation condition~\eqref{eq:structure_preserving} holds on $U$. Define
\[
\Delta_{\mathrm{cert}}(G,\bar\Gamma)
:=
\Phi_G(\bar\Gamma)-p^*(G),
\]
where $p^*(G)$ is the optimal value of the original unperturbed marginal
problem. Then, for any $\Gamma_\epsilon\in U$ with
$\Gamma_\epsilon-\bar\Gamma=O(\epsilon)$,
\[
\Phi_G(\Gamma_\epsilon)-p^*(G)
=
\Delta_{\mathrm{cert}}(G,\bar\Gamma)+O(\epsilon^2).
\]
If, in addition, $\bar\Gamma$ satisfies the sufficient condition of
Proposition~\ref{prop:linear_sufficiency}, then $\Delta_{\mathrm{cert}}(G,\bar\Gamma)=0$, and
\[
0\le \Phi_G(\Gamma_\epsilon)-p^*(G)=O(\epsilon^2).
\]
}

\begin{proof}
The decomposition
\[
    \Phi_G(\Gamma_\varepsilon)-p^\star(G)
    =
    \bigl[\Phi_G(\bar\Gamma)-p^\star(G)\bigr]
    +
    \bigl[\Phi_G(\Gamma_\varepsilon)-\Phi_G(\bar\Gamma)\bigr]
    =
    \Delta_{\mathrm{cert}}(G,\bar\Gamma)+O(\varepsilon^2)
\]
follows from the definition of \(\Delta_{\mathrm{cert}}\) and
Theorem~\ref{thm:second-order-stability}.

Under Proposition~\ref{prop:linear_sufficiency},
\(\hat\mu_G(\bar z,\bar\Gamma)\) is optimal for the original marginal
problem, so \(\Phi_G(\bar\Gamma)=p^\star(G)\) and
\(\Delta_{\mathrm{cert}}(G,\bar\Gamma)=0\). 
Structure preservation~\eqref{eq:structure_preserving}, together with certificate consistency on $U$,
ensures that the recovered marginal at $\Gamma_\epsilon$ lies in
$\mathcal M_G$, so $\Phi_G(\Gamma_\epsilon)\ge p^*(G)$, hence the nonnegative \(O(\varepsilon^2)\) bound.
\end{proof}

\section{Implementation Details}
\label{app:details}

This appendix gives, for each of the three benchmarks in the experiments, the dataset construction, the network used to predict $\Gamma_\theta(G)$, the inner recovery layer $Q_G$, the certificate $y$ and its readout $C_G$, the training procedure, and the implementation of the baselines we compare against. Throughout, $\Gamma$, $y$, $\lambda$, $Q_G$, $C_G$, and $\mathcal S_G(y)$ refer to the objects defined in Section~\ref{sec:framework}.

\subsection{Experimental environment}
Experiments were conducted on a workstation equipped with an AMD Ryzen Threadripper PRO 7955WX v16 CPU and 1 NVIDIA RTX 5090, running Ubuntu \texttt{22.04} LTS. The models were implemented using Python \texttt{3.10} and PyTorch \texttt{2.7.0}, with additional libraries including torch-scatter \texttt{2.1.2} and torch-geometric \texttt{2.6.1}.
For reproducibility, a random seed of \texttt{42} was set for all random number generators. The code for this experiment is available at~\url{https://anonymous.4open.science/r/Neural-Certificate-Pricing-D515/README.md}.

\subsection{Generalized Assignment Problem (GAP)}
\label{app:details:gap}

\paragraph{Instances.}
We use the ORLIB GAP instances~\cite{beasley1990or}, partitioned into a $60\%/20\%/20\%$ train/validation/test split that is shared across all methods. Following the strict no-reject convention, profit maximization is restated as cost minimization without dummy reject rows, and both the inner recovery layer and the final repair use the original ORLIB cost vector $c$ and capacity vector.

\paragraph{Outer network for $\Gamma_\theta(G)$.}
$\Gamma_\theta(G)$ is an item--machine cost shift produced by a bipartite heterogeneous attention encoder over the resource--job graph, with resource, job, and edge features capturing capacity slack, cost ranks, and warm-start usage. A gauge head decomposes the raw edge score into a usage-aligned component and an orthogonal component, so that $\Gamma_\theta(G)$ acts on reduced costs rather than on the cost vector directly; the resulting perturbation is bounded through a smooth clip.

\paragraph{Inner recovery layer.}
The inner layer $Q_G(\Gamma,y,\lambda)$ is the differentiable assignment LP on the perturbed cost $\tilde c=c+\Gamma_\theta(G)$ over the local assignment polytope with the original capacity constraints using \texttt{cvxpylayers}~\cite{agrawal2019differentiable}. The
capacity dual $\lambda$ is read off in parallel via a non-differentiable LP solve; because $\lambda$ is piecewise constant in $c$, the gradient through the recovered marginal $\hat\mu$ alone is correct.

\paragraph{Certificate and decoder.}
The certificate $y$ is a per-job top-$k$ candidate mask induced by the
reduced score $\hat r_{ij}=-(c_{ij}+\gamma\lambda_i u_{ij})$ with
$k\le3$, and the warm-start assignment is unioned in to guarantee
feasibility for all methods. The readout $C_G(\hat\mu)$ applies the same top-$k$ rule
to the reduced score derived from $\hat\mu=Q_G(\Gamma,y,\lambda)$.
The certificate-restricted recovery decodes an exact assignment over
the candidate edges under the original cost and capacity, with a
$5$-second per-instance time limit.

\paragraph{Training.}
We jointly train the outer encoder, the gauge head, and the steering head under
\begin{equation*}
    \mathcal L
    \;=\; \langle c,\hat\mu\rangle
    \;+\; w_\delta\,\|\Gamma_\theta(G)\|^{2}
    \;+\; w_{\text{st}}\,\|q\|^{2},
\end{equation*}
where $\hat\mu$ is the recovered soft assignment and $q$ is the gauge-head projection magnitude. Optimization uses AdamW with learning rate $3\!\times\!10^{-4}$, $60$ epochs, and $w_\delta=w_{\text{st}}=10^{-2}$. Checkpoints are selected by the mean
optimality gap on the validation split.

\paragraph{Baselines.}
\begin{description}

\item[\textsc{LP-round}.] Solve the unperturbed assignment LP, round the primal solution per job to its \texttt{argmax}, and pass the resulting mask through the same exact-assignment repair.
\item[\textsc{DIFUSCO}, \textsc{Fast-T2T}.] Both adapt the diffusion decoders of \cite{sun2023difusco,li2024fast} to GAP. The
encoder is a $12$-layer gated GCN; supervision is a per-edge $\{0,1\}$ label obtained by solving each training instance to optimality. \textsc{DIFUSCO} trains a categorical discrete diffusion model; \textsc{Fast-T2T} trains the same encoder under a consistency loss and decodes in one step plus a single rewrite. Both produce an edge-probability map that is top-$k$ masked and repaired by the same exact-assignment routine, so only the candidate generator differs from NCP.
\end{description}

\subsection{Maximum Independent Set (MIS)}
\label{app:details:mis}

\paragraph{Instances and ground truth.}
We use \textsc{twitter-snap}, \textsc{collab}, and \textsc{imdb-binary}, under a $60{:}20{:}20$ train/validation/test split fixed by a single seed and shared across all methods. Optimal solutions are computed by an exact integer programming solver; every graph is solved to optimality. For the vertex-weighted variant, node weights $\omega_v\sim\mathcal U(0,1)$ are drawn once with a fixed seed so that all methods see identical inputs, and ground truth is recomputed under $\max\sum_v\omega_v x_v$.

\paragraph{Clique-cover certificate.}
The structure certificate $y$ is a clique cover of $G$, constructed once per graph by an edge-incremental greedy procedure that repeatedly extends an uncovered edge to a maximal clique and removes its edges from the uncovered set. The cover is verified post-construction. It is not learned and is shared across methods that consume it; it indexes the structured set
\[
    \mathcal S_G(y)
    \;=\;
    \bigl\{x\in[0,1]^V : \textstyle\sum_{v\in C}x_v\le 1\;\forall C\in y\bigr\},
\]
which strictly contains the stable-set polytope and is a tighter relaxation than the edge formulation.

\paragraph{Outer network for $\Gamma_\theta(G)$.}
$\Gamma_\theta(G)\in\mathbb R^{|V|}$ is produced by a multi-layer multi-head GIN backbone following \cite{karalias2020erdos}, projected through a small head and squashed to $[-1,1]$ around a zero-perturbation anchor. It enters the inner layer through node weights $w_v=1+\Gamma_\theta(G)_v$ for unit MIS and $w_v=\omega_v+\Gamma_\theta(G)_v$ for the weighted variant, so $\Gamma_\theta(G)$ acts as a learned reweighting of the node objective.

\paragraph{Inner recovery layer.}
$Q_G(\Gamma,y,\lambda)$ is $T$ steps of projected primal--dual ascent on the clique-cover relaxation $\max\{\langle w,x\rangle : x\in\mathcal S_G(y)\}$, with primal updates clipped to $[0,1]$ and dual updates rectified at zero. All $T$ steps are unrolled, so $\hat\mu$ is fully differentiable in $\theta$ via backpropagation through the recovery iterations.

\paragraph{Decoder.}
The certificate readout $C_G$ is a greedy independent set: nodes are visited in descending $\hat\mu$ order and added to the IS unless they are adjacent to an already-selected node. The weighted variant scores the same procedure by $\sum_{v\in S}\omega_v$.

\paragraph{Training.}
We train under the unsupervised relaxation loss
\begin{equation*}
    \mathcal L
    \;=\; -\langle w_{\text{ref}},\hat\mu\rangle
    \;+\; \beta\sum_{(u,v)\in E}\mathrm{ReLU}(\hat\mu_u+\hat\mu_v-1),
    \qquad \beta=0.05,
\end{equation*}
where $w_{\text{ref}}$ is the unit weight (resp.\ the true node weight $\omega$ for the weighted variant), \emph{not} the perturbed weight. Optimization uses Adam with a step decay schedule. Checkpoints are selected by mean validation IS size (resp.\ IS weight).

\paragraph{Baselines.}
\begin{description}
\item[Erdos-Neural \cite{karalias2020erdos}.] Same GIN backbone with the original Bernoulli-style relaxation loss $-\sum_v p_v+\beta\sum_{(u,v)\in E}p_u p_v$; per-dataset $\beta$ chosen by validation. No inner recovery layer.
\item[\textsc{DIFUSCO} \cite{sun2023difusco}.] Categorical discrete diffusion over node assignments, supervised by the indicator of the exact optimum; standard greedy projection at decoding.
\item[\textsc{Fast-T2T} \cite{li2024fast}.] Same encoder as \textsc{DIFUSCO}, trained under a consistency objective and decoded in
one step plus a single rewrite.
\item[\textsc{KaMIS}~\cite{lamm2016finding}.] An external exact branch-and-cut MIS solver under a $5$-second time limit; included as a non-learning reference.
\end{description}

\subsection{Elementary Shortest Path with Negative Cycles (ESPP-NC)}
\label{app:details:espp}

\paragraph{Instances.}
We synthesize $3{,}000$ directed graphs equally split among Erdős--Rényi, Barabási--Albert each with $|V|\in \{50,100,200,500\}$,
fixed source/sink, and edge weights drawn so that negative cycles are common; densities and orientations are randomized within fixed ranges. Each graph is decorated with spectral, positional, and degree node features, and edge features comprising weight and a spectral distance. We use an $80{:}20$ split per family.

\paragraph{Certificate and outer network.}
The certificate $y$ for ESPP-NC is a node-potential vector $d\in\mathbb R^{|V|}$ that turns the Bellman recursion well-posed in the presence of negative cycles. The output GNN plays the role of $\Gamma_\theta(G)=\Delta c_{uv}$ as the edge weight perturbation.

\paragraph{Inner recovery layer.}
$Q_G(\Gamma,y,\lambda)$ is $K=20$ iterations of a discounted smoothed Bellman update
\begin{align*}
    [T_\tau d]_u
    &\;=\; -\tau\,\mathrm{logsumexp}_{v:(u,v)\in E}
            \!\bigl((-c_{uv}-d_v)/\tau\bigr), \\
    d^{(k+1)}
    &\;=\; \gamma\,T_\tau d^{(k)} + (1-\gamma)\,d^{(0)},
    \qquad d^{(k+1)}_t = 0,
\end{align*}
with $\gamma=0.9$ and disabled edges masked. Certificate consistency reduces to $y=C_G(\hat\mu)$ with $C_G$ defined implicitly through the smoothed Bellman fixed point.

\paragraph{Primal recovery.}
Given the fixed point $d^{\star}$, the recovered transition law is the softmax of the Bellman residual over each node's out-edges with the sink absorbing,
\[
    P_{u,v}
    \;=\;
    \mathrm{softmax}_v\!\bigl((-c_{uv}-d^{\star}_v)/\tau\bigr),
    \qquad
    P_{t,\cdot}=0.
\]
Expected node visits under the resulting absorbing chain are obtained by solving a small regularized linear system, and the recovered
marginal $\hat\mu_{uv}=x_u P_{u,v}$ is the expected edge flow. The regularizer keeps the system well-conditioned as $\tau\to 0$ and the transition law sharpens.

\paragraph{Loss and training.}
Training minimizes the unsupervised expected path cost $\mathcal L=\langle c,\hat\mu\rangle$ with no auxiliary penalties on a mixed ER and BA graph set including only 100-node size. Optimization uses Adam with learning rate $5\!\times\!10^{-4}$, batch size $32$, $30$ epochs, and gradient-norm clipping; a per-batch loss floor discards rare cycle-collapsed batches before they corrupt the gradient.

\paragraph{$\tau$-annealing.}
At fixed $\tau>0$ the smoothed Bellman map is a strict contraction, but the recovered transitions retain mass on edges that violate $d^{\star}$ as a potential, so probability flow can re-enter cycles. We linearly anneal $\tau$ from $\tau_{\text{start}}=5.0$ to $\tau_{\text{end}}=0.2$ over the $30$ training epochs. As $\tau$ decreases, the recovered transitions concentrate on edges that respect $d^{\star}$ as a potential and elementarity of the support emerges from the certificate without an explicit penalty term. 

\paragraph{Decoder.}
At inference, the recovered transition law $P$ is fed to a path decoder that samples up to $300$ elementary $s$--$t$ paths by drawing
successors from a softmax over unvisited out-edges weighted by $\log P_{u,v}$, with the visited set enforcing elementarity and the
path length capped at $|V|$. The decoder returns the cheapest completed path under the original cost $c$.

\paragraph{Baselines.}
\begin{description}
\item [\textsc{Labeling}~\cite{feillet2004exact}.]  A labeling algorithm with the standard dominance rule
\[
    \text{cost}(A)\le\text{cost}(B)\;\wedge\;
    \text{visited}(A)\subseteq\text{visited}(B)
    \;\Rightarrow\;A\text{ dominates }B,
\]
under a label-count budget with 1M labels, using \texttt{julia} implemented.
\item[\textsc{LP-Decode}~\cite{chen2025unsupervised}.] Solve the bounded shortest-path LP with standard flow-conservation constraints and greedily extend a
partial path from $s$ along edges with positive LP support, preferring larger support and breaking ties by lower cost. 
\item[\textsc{Beam Search}~\cite{lowerre1976harpy}.] Run a width $\{256,512\}$ beam search over elementary paths under a time budget $= 30\,s$ without count-budget, returning the completed path with the lowest cost.
\end{description}
\section{Out-of-distribution Results on MIS}
\label{Appendix:Out}
\begin{table}[h]
\centering
\caption{Cross-dataset zero-shot transfer results. Bold indicates best OOD performances.}
\label{tab:mis-zeroshot}
\scriptsize
\setlength{\tabcolsep}{12pt}
\begin{tabular}{l l ccc}
\toprule
& & \multicolumn{3}{c}{Test dataset} \\
\cmidrule(lr){3-5}
Method & Trained on & TWITTER & COLLAB & IMDB \\
\midrule
\multirow{3}{*}{Erdos}
  & TWI   & 0.9558 / 1.81\,ms  & 0.8179 / 1.02\,ms & 0.5450 / 0.16\,ms \\
  & COL  & 0.9697 / 1.86\,ms          & 0.9976 / 0.91\,ms & 0.9229 / 0.10\,ms \\
  & IMDB   & 0.5597 / 1.74\,ms          & 0.4271 / 0.93\,ms & 0.4373 / 0.11\,ms \\
\midrule
\multirow{3}{*}{DIFUSCO}
  & TWI & 0.9809 / 303\,ms  & 0.9908 / 369\,ms          & 0.9972 / 272\,ms          \\
  & COL  & 0.9775 / 301\,ms           & 0.9980 / 323\,ms & 1.0000 / 281\,ms          \\
  & IMDB    & 0.9376 / 287\,ms           & 0.9669 / 299\,ms          & 0.9967 / 266\,ms \\
\midrule
\multirow{3}{*}{Fast-T2T}
  & TWI & 0.9876 / 30.6\,ms & \textbf{0.9992} / 26.3\,ms         & 1.0000 / 22.8\,ms         \\
  & COL  & 0.9800 / 30.7\,ms          & 0.9992 / 26.7\,ms& 1.0000 / 22.5\,ms         \\
  & IMDB    & 0.9510 / 31.0\,ms          & 0.9881 / 27.2\,ms         & 0.9978 / 22.7\,ms\\
\midrule
\multirow{3}{*}{NCP}
  & TWI & 0.9881 / 1.93\,ms & 0.9973 / 0.98\,ms         & \textbf{1.0000 }/ 0.20\,ms         \\
  & COL  & \textbf{0.9891} / 1.87\,ms          & 0.9984 / 0.97\,ms& \textbf{1.0000} / 0.18\,ms         \\
  & IMDB   & \textbf{0.9635} / 1.97\,ms   & \textbf{0.9908} / 1.03\,ms         & 1.0000 / 0.14\,ms\\
\bottomrule
\end{tabular}
\\[2pt]
\end{table}

\end{document}